\newcommand{\define}{\triangleq}
\newcommand{\newterm}{\textit}
\newcommand{\sR}{\mathbb{R}}
\newcommand{\sX}{\mathbb{X}}
\newcommand{\sE}{\mathbb{E}}
\newcommand{\id}{\mathbf{1}}
\newcommand{\E}{\sE}
\newcommand{\featureset}{X}
\newcommand{\feasible}{\mathcal{F}}
\newcommand{\cost}{c}
\newcommand{\rcost}{\overline{c}}
\newcommand{\enc}{\phi}
\newcommand{\modcost}{c}
\newcommand{\graph}{\mathcal{G}}
\newcommand{\posclf}{f}
\newcommand{\decision}{F}
\newcommand{\x}{x}
\newcommand{\initx}{\x}
\newcommand{\advx}{\x^*}
\newcommand{\rx}{\overline{\x}}
\newcommand{\lp}[1]{L_{#1}}
\newcommand{\gain}{g}
\newcommand{\util}{u}
\newcommand{\cmark}{\ding{51}}%
\newcommand{\xmark}{\ding{55}}%
\newtheorem{assumption}{Assumption}[]
\newtheorem{lemma}{Lemma}[]
\newtheorem{statement}{Statement}[]
\def\equationcrefname~#1\null{Equation~(#1)\null}
\DeclareMathOperator*{\argmin}{arg\,min}
\definecolor{divergent1}{rgb}{0.12156862745098039, 0.4666666666666667, 0.7058823529411765}
\definecolor{divergent2}{rgb}{1.0, 0.4980392156862745, 0.054901960784313725}
\definecolor{divergent3}{rgb}{0.17254901960784313, 0.6274509803921569, 0.17254901960784313}
\definecolor{rocket1}{rgb}{0.20973515, 0.09747934, 0.24238489}
\definecolor{rocket2}{rgb}{0.43860848, 0.12177004, 0.34119475}
\definecolor{rocket3}{rgb}{0.67824099, 0.09192342, 0.3504148}
 \newcommand{\kknote}[1]{}
\newcommand{\bknote}[1]{}
\newcommand{\ctnote}[1]{}
\newcommand{\parabf}[1]{\smallskip\noindent\textbf{#1}}
\newcommand{\parait}[1]{\smallskip\noindent\textit{#1}}
\renewcommand{\paragraph}{\parabf}
\title{Adversarial Robustness for Tabular Data through Cost and Utility Awareness
}
\newcommand\CoAuthorMark{\footnotemark[\arabic{footnote}]}
\author{Klim Kireev\footnote{Contributed equally.}, \ Bogdan Kulynych\protect\CoAuthorMark, \  Carmela Troncoso\\EPFL SPRING Lab}
\date{}
\begin{document}

\maketitle

\begin{abstract}
\noindent Many safety-critical applications of machine learning, such as fraud or abuse detection, use data in tabular domains. Adversarial examples can be particularly damaging for these applications. Yet, existing works on adversarial robustness primarily focus on machine-learning models in image and text domains. We argue that, due to the differences between tabular data and images or text, existing threat models are not suitable for tabular domains. These models do not capture that the costs of an attack could be more significant than imperceptibility, or that the adversary could assign different values to the utility obtained from deploying different adversarial examples. We demonstrate that, due to these differences, the attack and defense methods used for images and text cannot be directly applied to tabular settings. We address these issues by proposing new cost and utility-aware threat models that are tailored to the adversarial capabilities and constraints of attackers targeting tabular domains. We introduce a framework that enables us to design attack and defense mechanisms that result in models protected against cost and utility-aware adversaries, for example, adversaries constrained by a certain financial budget. We show that our approach is effective on three datasets corresponding to applications for which adversarial examples can have economic and social implications.
\\[1.4em]
\noindent \textit{This is the extended version of a conference paper appearing in the proceedings of the 2023 Network and Distributed System Security (NDSS) Symposium. Please cite the conference version~\cite{confversion}.}
\end{abstract}

\section{Introduction}\label{sec:intro}
Adversarial examples are inputs deliberately crafted by an adversary to cause a classification mistake. They pose a threat in applications for which such mistakes can have a negative impact on deployed models (e.g., a financial loss~\cite{GhamiziCGPBTG20} or a security breach~\cite{DemontisMBMARCG17, GrosseMP0M17, KolosnjajiDBMGER18}). Adversarial examples also have positive uses. For instance, they offer a means of redress in applications in which classification causes harm to its subjects (e.g., privacy-invasive applications~\cite{JiaG18, KulynychOTG20, AlbertPSS20}).

The literature on adversarial examples largely focuses on image~\cite{SzegedyZSBEGF13, GoodfellowSS14, PapernotMJFCS16, Moosavi-Dezfooli16, CarliniWagner17, MadryMSTV17} and text domains~\cite{YangCHWJ20, WangHBSMLZ20, WangPLL19, LeiWCDDW18, EbrahimiRLD18, LiangLSBLS18}.
Yet, many of the applications where adversarial examples are most damaging or helpful are not images or text. High-stakes fraud and abuse detection systems~\cite{CarminatiSPZ20},
risk-scoring systems~\cite{GhamiziCGPBTG20}, operate on \emph{tabular data}: A cocktail of categorical, ordinal, and numeric features. As opposed to images, each of these features has its own different semantics. For example, in a typical representation of an image, all dimensions of an input vector are similar in their semantics: they represent a color of a pixel. In tabular data, one dimension could correspond to a numeric value of a person's salary, another to their age, and another to a categorical value representing their marital status.

The properties of the image domain have shaped the way adversarial examples and adversarial robustness are approached in the literature~\cite{Moosavi-Dezfooli16} and have greatly influenced adversarial robustness research in the text domain. In this paper, we argue that adversarial examples in tabular domains are of a different nature, and adversarial robustness has a different meaning. Thus, the definitions and techniques used to study these phenomena need to be revisited to reflect the tabular context. 

We argue that two high-level differences need to be addressed. First, imperceptibility, which is the main requirement considered for image and text adversarial examples, is ill-defined and can be irrelevant for tabular data. Second, existing methods assume that all adversarial inputs have the same value for the adversary, whereas in tabular domains different adversarial examples can bring drastically different gains.

\paragraph{Imperceptibility and semantic similarity are not necessarily the primary constraints in tabular domains.}
The existing literature commonly formalizes the concept of ``an example deliberately crafted to cause a misclassification'' as a \newterm{natural example}, i.e., an example coming from the data distribution, that is \emph{imperceptibly} modified by an adversary in a way that the classifier's decision changes. Typically, imperceptibility is formalized as closeness according to a mathematical distance such as $\lp{p}$~\cite{SharifBR18, ZhangSAL19}.

In tabular data, however, imperceptibility is not necessarily relevant. Let us consider the following toy example of financial-fraud detection:
Assume a fraud detector takes as input two features: (1) transaction \textsf{amount}, and (2) \textsf{device} from which the transaction was sent. 
The adversary aims to create a fraudulent financial transaction.
The adversary starts with a natural example (\textsf{amount}=$\$200$, \textsf{device}=`Android phone') and changes the feature values until the detector no longer classifies the example as fraud.
In this example, \emph{imperceptibility is not well-defined}. Is a modification to the \textsf{amount} feature from \$200 to \$201 imperceptible? What increase or decrease would we consider perceptible? The issue is even more apparent with categorical data, for which standard distances such as $\lp{2}$, $\lp{\infty}$ cannot even capture imperceptibility: Is a change of the \textsf{device} feature from Android to an iPhone imperceptible? 
Even if imperceptibility was well-defined, \emph{imperceptibility might not be relevant.} Should we only be concerned about adversaries making ``imperceptible'' changes, e.g., modifying \textsf{amount} from \$200 to \$201? What about attack vectors in which the adversary evades detection while changing the transaction by a ``perceptible'' amount: from \$200 to \$2,000? %

Formalizing adversarial examples as imperceptible modifications narrows the mathematical tools that can be used to study adversarial examples in their broad sense. In the case of tabular data, this prevents the study of techniques that adversaries could employ in ``perceptible'', yet effective ways.

We argue that in tabular data the primary constraint should be \emph{adversarial cost}, rather than any notion of similarity.
Instead of looking at how visually or semantically similar are the feature vectors, the focus should be on \emph{how costly it is for an adversary to enact a modification}. Costs capture the effort of the adversary, e.g., financial or computational. ``How much money does the adversary have to spend to evade the detector?'' better captures the possibility that an adversary deploys an attack than establishing a threshold on the $\lp{p}$ distance the adversary could tolerate. In the fraud-detection example, regardless of whether a change from Android to iPhone is imperceptible and semantically similar or not, it is certain that the change costs the adversary a certain amount of resources. How significant are these costs determines the likelihood of the adversary deploying such an attack.

\paragraph{Different tabular adversarial examples are of different value to the adversary.}
In the literature, with a notable exception of \citet{ZhangE19}, defenses against adversarial examples implicitly assume that all adversarial examples are equal in their importance~\cite{GoodfellowSS14, MadryMSTV17, ZhangYJXGJ19, WongSMK18, ShafahiNG0DSDTG19}. 
In tabular data domains, however, different adversarial examples can bring very different \newterm{gains} to the adversary. In the fraud-detection example, if a fraudulent transaction with transaction \textsf{amount} of \$2,000 successfully evades the detector, it could be significantly more profitable to the adversary than a transaction with \textsf{amount} of \$200. 

Using the adversarial cost as the primary constraint for adversarial examples provides a natural way to incorporate the variability in adversarial gain. 
The adversary is expected to care about the profit obtained from the attack, i.e., the difference between the cost associated with crafting an adversarial example, and the gain from its successful deployment. We call this difference the \newterm{utility} of the attack. We show how utility can be incorporated into the design of attacks to ensure their economic profitability, and into the design of defenses to ensure protection against adversaries that focus on profit.

\smallskip
In this paper, we introduce a framework to study adversarial examples tailored to tabular data. Our contributions: 
\begin{itemize}

\item We propose two \emph{adversarial objectives} for tabular data that address the limitations of the standard approaches: a \emph{cost-bounded} objective that substitutes standard imperceptibility constraints with adversarial costs; and a novel \emph{utility-bounded} objective in which the adversary adjusts their expenditure on different adversarial examples proportionally to the potential gains from deploying them. 

\item We propose a practical attack algorithm based on greedy best-first graph search for crafting adversarial examples that achieve the objectives above.

\item We propose a new method for adversarial training to build classifiers that are practically resistant to adversaries pursuing our adversarial objectives.

\item We empirically evaluate our attacks and defenses in realistic conditions demonstrating their applicability to real-world security scenarios. Our evaluation shows that cost-bounded defenses that ascribe equal importance to every example, as traditional approaches do, can degrade robustness against adversaries for whom some attacks have more value than others. Defenses crafted against these adversaries, however, perform well against both cost-oriented and utility-oriented adversaries.

\end{itemize}

\section{Evasion Attacks}
This section introduces the notation and the formal setup of \newterm{evasion attacks}~\cite[see, e.g., ][]{BiggioR18, PapernotMSW16} in tabular domains.

\paragraph{Feature Space in Tabular Domains.} The input domain's \newterm{feature space} $\sX$ is composed of $m$ features: $\sX \subseteq \sX_1 \times \sX_2 \times \cdots \times \sX_m$. For example $\x \in \sX$, we denote the value of its $i$-th feature as $\x_i$. Features $x_i$ can be categorical, ordinal, or numeric. Each example is associated with a binary class label $y \in \{0, 1\}$.

\paragraph{Target Classifier.} We assume the adversary's \newterm{target} to be a binary classifier $\decision(\x) \in \{0, 1\}$ that aims to predict the class $y$ to which an example $\x$ belongs. It is parameterized by a decision function $\posclf(\x) \in [0, 1]$ such that $\decision(\x) = \id[\posclf(\x) > \nicefrac{1}{2}]$.
The output of $\posclf(\x)$ can be interpreted as a score for $\x$ belonging to the positive class ($y = 1$). 
We focus on binary classification as it is the task in which adversarial dynamics typically arise in tabular domains (e.g., fraud detection~\cite{CarminatiSPZ20} or risk-scoring systems~\cite{GhamiziCGPBTG20}).

\paragraph{Adversarial Examples.}
An evasion attack proceeds as follows: The adversary starts with an initial example $\x \in \sX$ with a label $y = y_s$. We call this class the \emph{adversary's source class}. The adversary's goal is to modify $\x$ to produce an \newterm{adversarial example} $\advx$ that is classified as $\decision(\advx) = y_t$, $y_s \neq y_t$. We call this the \emph{adversary's target class}. The attack is \newterm{successful} if the adversary can produce such an adversarial example. Depending on the adversarial objective, the adversarial example might also need to satisfy additional constraints, as detailed in \cref{sec:statement}.

Because an attack is performed using an adversarial example, as in the literature, we use the terms \textit{adversarial example} and \textit{attack} interchangeably.

Our methods can be used in a multi-class setting as they are agnostic to which class is the target one. Our notation, however, is specific to the binary setting for clarity.

\paragraph{Adversarial Model.} In terms of capabilities, we assume the adversary can only perform modifications that are within the domain constraints. In the fraud-detection example, the adversary can change the transaction amount, but the value must be positive. 
For a given initial labeled example $(\x, y)$, we denote the set of feasible adversarial examples that can be reached within the capabilities of the adversary as $\feasible(\x, y) \subseteq \sX$. 

In terms of knowledge, we assume that the adversary has black-box access to the target classifier: The adversary can issue queries using arbitrary examples and obtain $\posclf(\x)$. In our evaluation (\cref{sec:experiments}) we compare this adversary against existing attacks with white-box access to the gradients.

\paragraph{Preservation of Semantics.} It is common to require that an adversarial example is \newterm{semantics-preserving}~\cite{SharifBR18, Pierazzi2020IntriguingPO}: the adversarial example retains the same true class as the original example.
We do not impose such a requirement. The only constraint we impose is that the modifications leading to an adversarial example \textit{are feasible within the domain constraints}, i.e., that the adversarial example belongs to the set $\feasible(\x, y)$. This is because in tabular domains limiting $\feasible(\x, y)$ to those adversarial examples that also preserve semantics is counterproductive: As long as the adversary successfully achieves their goal with an adversarial example that is feasible and is within their budget (see \cref{sec:statement}), the attack presents a valid threat.

\section{Adversarial Objectives in Tabular Data}
\label{sec:statement}
As we detail in \cref{sec:intro}, the approaches to adversarial modeling tailored to image or text data have two critical limitations when applied to tabular domains:
\begin{enumerate}
    \item[1.] \textit{Focus on imperceptibility and semantic similarity.}
    Neither closeness to natural examples in $L_p$ distance, nor closeness in terms of semantic similarity, is a well-applicable definition of adversarial examples in tabular domains. This is because such similarities are either ill-defined for mixed-type features (e.g., as is the case with $L_p$ distance), or potentially irrelevant to the quantification of adversarial constraints (both $L_p$ and semantic similarity).
    \item[2.] \textit{Assuming all adversarial examples are equally useful.} Most existing defenses against adversarial examples do not distinguish different attacks in terms of their value for the adversary. In tabular domains, due to the inherent heterogeneity of the data, some attacks could bring significantly more gain to the adversary. 
\end{enumerate}
Next, we propose adversarial objectives which aim to address these limitations.

\subsection{Cost-Bounded Objective}
Evasion attacks which use adversarial costs were first formalized in early works on adversarial machine learning~\cite{LowdM05, BarrenoNSJT06}. In these works, the adversary aims to find evading examples with minimal cost. Since the discovery of adversarial examples in computer vision models~\cite{SzegedyZSBEGF13}, this formalization was largely abandoned in favor of constraints based on $\lp{p}$ and other mathematical distances (e.g., Wasserstein distance \cite{wong2019wasserstein} or LPIPS~\cite{laidlaw2021perceptual}\cite{kireev2022effectiveness}).
In this work, we revisit the cost-oriented approach, which better reflects the adversary's capabilities in tabular domains.

In a standard way to obtain an adversarial example~\cite{MadryMSTV17}, the adversary aims to construct an example that maximizes the classification loss $\ell(\cdot, \cdot)$ incurred by the target classifier, while keeping the $\lp{p}$-distance from the initial example bounded:
\begin{equation}\label{eq:adv-example-classical}
    \max_{\x' \in \feasible(\x, y)} \ell(\posclf(\x'), y) \quad \text{ s.t. } \|\x' - \x\|_p \leq \varepsilon 
\end{equation}

This objective implicitly assumes that the adversary wants to keep the adversarial example as similar to the initial example as possible in terms of the examples' feature values. The closeness in terms of $\lp{p}$ distance aims to capture imperceptibility and to preserve the original example's semantics~\cite{SharifBR18}.

\paragraph{From Distances to Costs.} To address the fact that imperceptibility or semantic similarity is not necessarily relevant for adversarial settings in tabular domains, we adapt the definition in \cref{eq:adv-example-classical} to the tabular setting by introducing a cost constraint. 

This constraint represents the limited amount of resources available to the adversary to evade the target classifier. If the adversary can find an adversarial example that achieves this goal within the cost budget, the adversary proceeds with the attack.
Formally, we associate a cost to the modifications needed to generate any adversarial example $\x' \in \feasible(\x, y)$ from the original example $(x, y)$. We encode this cost as a function $\cost: \sX \times \sX \rightarrow \sR^+$. We assume the generation cost is zero if and only if no change is enacted: $\cost(\x, \x') = 0 \iff x = x'$.

This formulation is generic: it can encompass geometric and semantic distances, but it goes beyond that. It exhibits the following desirable properties:
\begin{enumerate}
\item[a.]\textit{Support for arbitrary feature types and rich semantics.} Whereas $L_p$ distances only support numeric features, our generic cost model can support any feature type. This is because it does not enforce any structural constraints on the exact form of cost of changing a feature value $\x_i$ into $\x_i'$. For example, the cost does not need to obey $|x_i - x_i'|$ as would be the case with $L_1$ distance. Moreover, unlike mathematical distances, our model does not require the costs to be symmetric. For instance, an increase in a feature value could have a different cost than a decrease.

\item[b.] \textit{Enables more generic quantification of adversarial effort.} Our cost model imposes neither a geometric structure such as is the case with $L_p$ distances, nor any ties to semantic similarity. Thus, the costs can be quantified in those units that are directly relevant to adversarial constraints. An important use case is that our model supports defining costs in the financial sense, i.e., assigning a dollar cost to mounting an attack with a given adversarial example as opposed to semantic closeness or closeness in feature space.

\item[c.] \textit{Support for feature-level accumulation.} Related literature on attacks in tabular data often formalizes costs using indepenent per-feature constraints (see \cref{sec:related}). Although our generic cost model supports such a special case, it also enables accumulation of per-feature costs. Therefore, it can encode a realistic assumption that changing more features increases adversary's expenditure.

\end{enumerate} 
 
\paragraph{The Optimization Problem.} We assume that the cost-bounded adversary has a budget $\varepsilon$. The adversary aims to find any example that flips the classifier's decision \emph{and} that is within the cost budget:
\begin{equation}\label{eq:cost-constrained-hard}
    \max_{\x' \in \feasible(\initx)} \id[ \decision(\x') \neq y ] \quad\text{s.t. $\cost(\x, \x') \leq \varepsilon$}
\end{equation}
Alternatively, the adversary can optimize a standard surrogate objective which ensures that the optimization problem can be solved in practice:
\begin{equation}\label{eq:cost-constrained-soft}
    \max_{\x \in \feasible(\initx, y)} \ell(\posclf(\x), y) \quad\text{s.t. $\cost(\initx, \x') \leq \varepsilon$},
\end{equation}

In the surrogate form, the optimization problem of the cost-bounded adversary is an adaptation of \cref{eq:adv-example-classical} with the norm constraint substituted by the adversarial-cost constraint.
This formalization is in line with recent formalizations of adversarial examples~\cite{MadryMSTV17}, as opposed to early approaches which aim to find minimal-cost attacks~\cite{LowdM05}. This enables us to reuse tools from the recent literature on adversarial robustness to design defenses (see \cref{sec:defenses}).

\subsection{Utility-Bounded Objective}
\label{sec:ub-obj}
The cost-bounded adversarial objective solves the issue of imperceptibility and semantic similarity not being suitable constraints for tabular data. It does not, however, tackle the problem of heterogeneity of examples: the adversary cannot assign different importance to different adversarial examples. In a realistic environment, it can be a serious drawback. For instance, an adversary might spend more resources than they gain from a successful attack. Another instance is the defender hypothetically suffering serious losses due to high-impact adversarial examples, even if for the majority of examples the defense is appropriate.

We propose to capture this heterogeneity by introducing the \newterm{gain} of an attack. The gain, $\gain: \sX \rightarrow \sR^+$, represents the reward (e.g., the revenue) that the adversary receives if their attack using a given adversarial example is successful.

We also introduce the concept of \newterm{utility}: the net benefit of deploying a successful attack. We define the utility $\util_{\initx, y}(\advx)$ of an attack mounted with adversarial example $\advx$ as simply the gain minus the costs:
\begin{equation}\label{eq:utility-defn}
    \util_{\initx, y}(\advx) \define \gain(\advx) - \cost(\initx, \advx),
\end{equation}
where $(\x, y)$ is the initial example. 

Recall that the adversary has black-box access to the target classifier. Thus, they can learn whether an example $\advx$ evades the classifier or not (i.e., whether $\decision(\advx) \neq y$). Then, they can decide to deploy an attack with an adversarial example $\advx$ only if the utility of the attack exceeds a given \newterm{margin} $\tau \geq 0$. Otherwise, the adversary discards this adversarial example. 
Formally, we can model this process by using a \emph{utility constraint} instead of a cost constraint:
\begin{equation}\label{eq:util-constrained-var-gain-hard}
    \max_{\x \in \feasible(\initx, y)} \id[\decision(\x) \neq y] \quad\text{s.t. $\util_{\initx, y}(\x) \geq \tau$}
\end{equation}

If we assume that the gain is constant for any adversarial example $\x' \in \feasible(\x, y)$ that is a modification of an initial example $(\initx, y)$, $\gain(\initx) = \gain(\x')$, this problem can also be seen as a variant of the cost-bounded formulation in \cref{eq:cost-constrained-hard}, where $\varepsilon$ varies for different initial examples:
\begin{equation}\label{eq:util-constrained-hard}
    \begin{aligned}
    \max_{\x' \in \feasible(\x, y)} & \id[\decision(\x') \neq y] \\
    \text{s.t. } & \util_{\x, y}(\x') \geq \tau \\
    \iff & \gain(x) - \cost(\x, \x') \geq \tau \\
    \iff & \cost(\x, \x') \leq \varepsilon(x) \define \gain(x) - \tau
    \end{aligned}
\end{equation}

In \cref{app:extra-discussion}, we discuss the formalization of a utility-maximization objective which models an adversary which wants to maximize their profit subject to budget constraints.

\subsection{Quantifying Cost and Utility}
A natural question in our setup is how to define the adversary's costs and gains. This question is relevant to all related prior work on adversarial robustness in tabular data (see \cref{sec:related}). For example, if adversarial robustness is defined in terms of an $L_p$ distance, both the attacker and defender need to determine an acceptable perturbation magnitude, which inherently comes from domain knowledge.

In our applications (see \cref{sec:experiments}), we focus on the settings in which adversarial capabilities are constrained in terms of financial costs. In such settings, we expect that the adversary is able to quantify the financial costs $c(\x, \x')$ and gains $g(x)$ by practical necessity. On the defender's side, estimating these values is trickier, as the defender might be unaware of the exact capabilities of the adversary. The defender thus needs to employ standard threat modeling techniques and domain knowledge. It is worth mentioning that the defender is not required to estimate the capabilities perfectly. Rather, they need to obtain the lower bound on the adversary's costs. After that, if the defended system is robust, it is robust against the adversary whose costs are at least as high as estimated.

In our utilitarian approach, it is possible to include other concerns and constraints of the adversary as part of the utility definition by measuring them in the same units as the utility (e.g., financial costs). For instance, as the driving concern behind imperceptibility-based approaches to adversarial robustness is the detection of an attack, the gain could be adjusted for a potential risk of being detected. The adversary could estimate the probability of being detected (e.g. using public statistics), and incorporate it into the gain by subtracting an expected value of the attack failure due to detection.

\section{Finding Adversarial Examples in Tabular Domains}
\label{sec:attacks}
In this section, we propose practical algorithms for finding adversarial examples suitable to achieve the adversarial objectives we introduce in~\cref{sec:statement}.

\subsection{Graphical Framework}
\label{sec:attack-algos}
The optimization problems in \cref{sec:statement} can seem daunting due to the large cardinality of $\feasible(\x, y)$ when the feature space is large. To make the problems tractable, we transform them into graph-search problems, following the approach by \citet{KulynychHST18}. Consider a \newterm{state-space graph} $\graph(\initx) = (V, E)$. Each node corresponds to a feasible example in the feature space, $V = \feasible(\x, y) \cup \{\initx\}$. Edges between two nodes $\x$ and $\x'$ exist if and only if they differ in value of one feature: there exists $i \in [n]$ such that $\x_i \neq \x'_i$, and $\x_j = \x'_j$ for all $j \neq i$. In other words, the immediate descendants of a node in the graph consist of all feasible feature vectors that differ from the parent in exactly one feature value.

Using this state-space graph abstraction, the objectives in \cref{sec:statement} can be modeled as graph-search problems. Even though the graph size is exponential in the number of feature values, the search can be efficient. This is because it can construct the relevant parts of the graph on the fly as opposed to constructing the full graph in advance.

Building the state-space graph is straightforward when features take discrete values. To encode continuous features in the graph we discretize them by only considering changes to a continuous feature $i$ that lie  within a finite subset of its domain $\sX_i$, in particular, on a discrete grid. The search efficiency depends on the size of the grid. As the grid gets coarser, finding adversarial examples becomes easier. This efficiency comes at the cost of potentially missing adversarial examples that are not represented on the grid but could fulfil the adversarial constraints with less cost or higher utility.

\subsection{Attacks as Graph Search}
\label{sec:assumptions}
In the remainder of the paper, we make the following assumptions about the adversarial model:

\begin{assumption}[Modular costs] The adversary's costs are \newterm{modular}: they decompose by features. Formally, changing the value of each feature $i$ from $\x_i$ to $\x'_i$ has the associated cost $\modcost_i(\x_i, \x'_i) > 0$, and the total cost of modifying $\x$ into $\x'$ is a sum of individual feature-modification costs:
\begin{equation}\label{eq:mod-costs}
    \cost(\x, \x') = \sum_i^n \modcost_i(\x_i, \x'_i)
\end{equation}
\end{assumption}
The state-space graph can encode modular costs by assigning weights to the graph edges. An edge between $\x$ and $\x'$ has an associated weight of $\modcost_i(\x_i, \x_i')$, where $i$ is the index of the feature that differs between $\x$ and $\x'$. For pairs of examples $\x^{(0)}$ and $x^{(t)}$ that differ in more than one feature, the cost $\cost(\x^{(0)}, x^{(t)})$ is the sum of the edge costs along the shortest path from $\x^{(0)}$ to $x^{(t)}$.

\begin{assumption}[Constant gain] For any initial example $(x, y)$, the adversary cannot change the gain:
\begin{equation}
    \forall x' \in \feasible(x, y): \quad \gain(x) = \gain(x') 
\end{equation}
\end{assumption}

This follows the approach in utility-oriented strategic classification (as detailed in \cref{sec:ub-obj}). This assumption is not formally required for our attack algorithms (described next in this section), but we focus on this setting in our empirical evaluations. In \cref{sec:var-gain}, we experimentally show that removing this assumption does not significantly affect our results.

\begin{algorithm}[t]
    \caption{Best-First Search (BFS)}
    \label{algo:bfs}
    \begin{algorithmic}[1]
    \Function{$\mathsf{BFS}_{B, s, \varepsilon}$}{$\initx$}
      \State $\mathtt{open} \gets \textsc{MinPriorityQueue}_B(\initx, 0)$
      \State $\mathtt{closed} \gets \{\}$
      \While{$\mathtt{open}$ is not empty}
        \State $v \gets \mathtt{open}.\textsc{pop}()$
        \If{$v \notin \mathsf{closed}$}
            \State $\textsc{closed} \gets \textsc{closed} \cup \{v\}$
        \EndIf
        \If{$\eta(v) \geq \delta$} \Return $v$ \EndIf
        \State $S \gets \textsc{expand}(v)$
        \For{$t \in S$}
            \If{$t \notin \mathtt{closed}$ and $\cost(\initx, t) \leq \varepsilon$} \State $\mathtt{open}.\textsc{add}(t, s(v, t))$
            \EndIf
        \EndFor
      \EndWhile
    \EndFunction
  \end{algorithmic}
\end{algorithm}

\paragraph{Strategies to Find Adversarial Examples.}
Under our two assumptions, the cost-bounded objective in \cref{eq:cost-constrained-hard} and the utility-bounded objective in \cref{eq:util-constrained-hard} can be achieved by finding any adversarial example that is classified as target class and is within a given cost bound. Thus, these adversarial goals can be achieved using \newterm{bounded-cost search}~\cite{SternPF11}. 

We start with the \newterm{best-first search} (BFS)~\cite{HartNR68, KulynychHST18}, a flexible meta-algorithm that generalizes many common graph search algorithms. In its generic version (\cref{algo:bfs}) BFS keeps a bounded priority queue of \emph{open nodes}. It iteratively pops the node $v$ with the highest score value from the queue (best first), and adds its immediate descendants to the queue. This is repeated until the queue is empty. The algorithm returns the node with the highest score out of all popped nodes.

The BFS algorithm is parameterized by the \newterm{scoring function} $s: V \times V = \sX \times \sX$ and the size of the priority queue $B$. Different choices of the scoring function yield search algorithms suited for solving different graph-search problems, such as Potential Search for bounded-cost search~\cite{SternPF11, SternFBPSG14}, and A$^*$~\cite{Korf85, DechterP85} for finding the minimal-cost paths. When $B = \infty$, the algorithm might traverse the full graph and is capable of returning the optimal solution. As the size of $B$ decreases, the optimality guarantees are lost. When $B = 1$ BFS becomes a \newterm{greedy} algorithm that myopically optimizes the scoring function. When $1 < B < \infty$ we get a \newterm{beam search} algorithm that keeps $B$ best candidates at each iteration.

To achieve the adversarial objectives in \cref{sec:statement}, we propose to use a concrete instantiation of BFS, what we call the \textit{Universal Greedy (UG)} algorithm. Inspired by heuristics for cost-bounded optimization of submodular functions~\cite{KhullerMN99, Wolsey82}, we set the scoring function to balance the increase in the classifier's score and the cost of change:
\begin{equation}\label{eq:ug-scoring-func}
    s(v, t) = -\frac{\posclf(t) - \posclf(v)}{\cost(v, t)} \,
\end{equation}
The minus sign appears because BFS expands the lowest scores first, and we need to maximize the score. We set the beam size to $B = 1$ (greedy), which enables us to find high-quality solutions to \textit{both} cost-bounded and utility-bounded problems at reasonable computational costs (see \cref{sec:experiments}).

\section{Defending from Adversarial Examples in Tabular Domains}
\label{sec:defenses}

A common way to mitigate the risks of adversarial examples is adversarial training~\cite{GoodfellowSS14, MadryMSTV17}. 
In a standard approach~\cite{MadryMSTV17}, these adversarial examples are constructed during training by modifying natural examples $x$ with perturbations constrained in an $\lp{p}$-ball of radius $\varepsilon$ as in \cref{eq:adv-example-classical}.
As explained in \cref{sec:intro,sec:statement}, this approach, however, does not apply to the tabular domains. 

Another difference between the image and tabular domains is the efficiency of generating adversarial examples. In images, adversarial examples used for training are generated using efficient methods such as Projected Gradient Descent (PGD)~\cite{MadryMSTV17} or the Fast Gradient Sign Method (FGSM)~\cite{GoodfellowSS14, WongRK20}. These algorithms produce adversarial examples fast, and enable the efficient implementation of adversarial training. Fast generation, however, is not possible for tabular domains. The algorithms to produce tabular adversarial examples introduced in \cref{sec:attacks} require thousands of inference operations using the target model. Although generating one example can take seconds, generating thousands of adversarial examples required during adversarial training quickly becomes infeasible for computationally constrained defenders. 
To make the generation of adversarial examples feasible during adversarial training, we introduce approximate versions of the attacks that rely on a relaxation of initial attack constraints.

\subsection{Relaxing the Constraints}\label{sec:defenses-relax}
Following the setting of the standard Projected Gradient Descent (PGD) method~\cite{MadryMSTV17}, adversarial training for the cost-bounded adversary could be defined as the following optimization objective:
\begin{equation}\label{eq:cost-constrained-hard-defense}
   \min_{\theta} \max_{\x' \in \feasible(\initx, y)} \ell(\posclf_\theta(\x'), y) \quad\text{s.t. $\cost(\initx, \x') \leq \varepsilon$},
\end{equation}
where $\posclf_{\theta}$ is a parametric classifier and $\theta$ are its parameters.

To keep the computational requirements low, we relax the problem to optimize over a convex set, which enables us to adapt the PGD method.
Let us define $B_\varepsilon$ to be the constraint region of \cref{eq:cost-constrained-hard-defense}:
\[
    B_\varepsilon(\x, y) \define \{ \x' \in \feasible(x, y) \mid \cost(\x, \x') \leq \varepsilon\}
\]
We construct a relaxation of $B_\varepsilon$ in two steps:
\[
    B_\varepsilon \xrightarrow[(1)]{} \bar B_\varepsilon  \xrightarrow[(2)]{}  \tilde B_\varepsilon
\]

\begin{enumerate}
    \item[(1)] \textit{Continuous relaxation.} We map $B_\varepsilon$ into a continuous space using an \newterm{encoding function} $\phi: \sX \rightarrow \sR^d$, and a \newterm{relaxed cost function} $\bar \cost: \sR^d \times \sR^d \rightarrow \sR^+$.
    The continuous relaxation is defined as:
    \begin{equation}
        \bar B_\varepsilon \define \{ \phi(x') \mid \bar \cost(\phi(x), \phi(x')) \leq \varepsilon\},
    \end{equation}
    where $x' \in \feasible(x, y)$.
    The pair $(\phi, \bar \cost)$ is designed to satisfy the following condition:
    \begin{equation}\label{eq:relaxed-cost}
        \forall x' \in B_\varepsilon(x, y): \, \bar \cost(\phi(x), \phi(x')) \leq \cost(x, x'),
    \end{equation}
    ensuring that every example $x' \in B_\varepsilon(x, y)$ is mapped to an element in the relaxed set, $\enc(x') \in \bar B_\varepsilon$.
    We denote the encoded value $\enc({x})$ as $\rx$.
    
    \item[(2)] \textit{Convex cover.} To enable adversarial training using PGD, we need that elements of the relaxed set can be projected onto the constraint region. For this purpose, we cover $\bar B_\varepsilon$ with a convex superset $\tilde B_\varepsilon$, e.g., a convex hull of $\bar B_\varepsilon$.
    The convex superset $\tilde B_\varepsilon$ needs to be constructed such that there exists an efficient algorithm for \newterm{projection}: For a given $(\x, y)$, and a point $t \in \sR^d$, we want to be able to efficiently solve
    $
        \min_{t' \in \tilde B_\varepsilon(\x, y)} \| t - t' \|_2.
    $
\end{enumerate}

\paragraph{Encoding and Cost Functions.}
As we assume that the cost of modifications is modular (see \cref{sec:assumptions}), we define the encoding and cost functions to be modular too:
$$
\enc(\x) = [{\enc_1}(\x_1),...,{\enc_n}(\x_n)]
$$
$$
\rcost(\enc({\x}), \enc({\x}')) = \sum_{i=1}^n \rcost_i(\enc_i({\x_i}), \enc_i({\x_i}'))
$$

With this formulation, the problem of constructing suitable $\enc$ and $\rcost$ functions is reduced to finding $\enc_i: \sX_i \rightarrow \sR^{d_i}$ and $\rcost_i$ for each feature, where $d_i$ is the dimensionality of the $i$-th feature after encoding. If for all $i \in [m]$ both $\enc_i$ and $\rcost_i$ individually fulfill the requirement in \cref{eq:relaxed-cost}:
\[ \rcost_i(\enc_i(x_i), \enc_i(x'_i)) \leq \cost_i(x_i, x_i'),\]
then the modular cost $\rcost(\rx, \rx')$ fulfills \cref{eq:relaxed-cost} as well. In a slight abuse of notation, we consider that $\rx = \phi(\x)$ is a concatenated $d$-dimensional vector, where $d = \sum_{i = 1}^m d_i$.

In the following we introduce concrete $\enc$ and  $\rcost$ functions for categorical and numeric features.

\parait{Categorical features.}
As the encoding function $\enc_i$ for categorical features we use standard one-hot encoding.
As the relaxed cost function for categorical features we define $\rcost_i$ as follows:
\[\rcost_i({\rx}_i, {\rx}'_i) = \min_{t \in \feasible_i(x, y)} \cost_i({\x}_i, t) \cdot \frac{1}{2} \|\bar \x_i - \bar \x'_i\|_1,\]
where $\feasible_i(x,y)$ is the set of feasible values of the feature $i$. 
For example, suppose that $\x_i$ is a categorical feature with 4 possible values $\featureset_i = \{a,b,c,d\}$, and the minimal cost of change is 2. When $\x_i=b$ and $\x'_i=c$ (corresponding to $\rx_i = (0, 1, 0, 0)$ and  $\rx'_i = (0, 0, 1, 0)$ after one-hot encoding), then $\rcost_i({\rx}_i, {\rx}'_i) = \frac{1}{2}\cdot2\cdot2 = 2$. Note that in this case we use $\rx_i = \enc_i(\x_i) \in \sR^{d_i}$ to mean a $d_i$-dimensional one-hot vector for simplicity of notation.

This cost function enables us to perform the two-step relaxation described before. First, it satisfies \cref{eq:relaxed-cost}, and therefore the constraint region $\bar B_\varepsilon$ includes all mapped examples of $B_\varepsilon$. Second, we can obtain the convex superset $\tilde B_\varepsilon$ as a continuous $\lp{1}$ ball around the mapped values $\bar \x \in \bar B_\varepsilon$. 

\parait{Numeric features.}
A numeric feature is a feature with values belonging to an ordered subset of $\sR$ (e.g. integer, real). In most cases, the identity function (${\enc_i}(\x_i) = x_i$) is sufficient for numerical features. However, more complex encoding functions could also be desirable. For example, when one needs to reduce numerical errors, which can be achieved by normalizing the feature values to $[-1, 1]$, or when the cost is non-linear.

In general, projecting onto arbitrary sets can be challenging. Specifically, the constraint region $B_\varepsilon$ could be non-convex.
We therefore must limit the scope of possible adversarial cost functions that we can model during adversarial training to those that are compatible with efficient projection.
For this, we introduce a cost model that covers a broad class of functions for which $c_i(\x_i, \x'_i)$ can be expressed as $K_i \cdot |\psi(x_i) - \psi(x'_i)|$, where $K_i$ is a constant and $\psi(x)$ is an invertible function.

For instance, this model covers the following exponential cost model: $c(x, x') = K \cdot |e^{x} - e^{x'}|$. In this case, we can encode the features as $\enc_i(x_i) = \psi^{-1}(x_i) \define \ln(x_i)$. This transformation enables us to account for certain non-linear cost functions $\cost$ with respect to the input space using linear cost functions $\rcost$ in the relaxed space $\bar B_\varepsilon$.

We define the relaxed cost function for numerical features as a piecewise-linear function, with different coefficients for increasing or decreasing the feature value:
\begin{equation}
\rcost_j(\rx_j, \rx'_j) = c_{j-}(\x) \cdot [\bar \x_j - \bar \x_j']^{+} + c_{j+}(\x) \cdot [\bar \x_j' 
- \bar \x_j]^{+}
\end{equation}
where $[t]^{+}$ returns $t$ if $t > 0$, and $0$ otherwise, and $c_{j-}(\x)$ and $c_{j+}(\x)$ encode the costs for decreasing and increasing the value of the feature $j$, respectively, and can vary from one initial example $\x$ to another.

Note that in this model the final cost of a modification could depend on the way in which this modification is achieved. A direct modification from $x$ to $x''$ could have different cost than first modifying $x$ to $x'$ and then $x'$ to $x''$, i.e., $\rcost(\bar \x, \bar \x'') \neq \rcost(\bar \x, \bar \x') + \rcost(\bar \x', \bar \x'')$.

\parait{Total cost.} Given the set of categorical feature indices, $\mathcal{C}$, and the set of numeric feature indices, $\mathcal{I}$, the total relaxed cost function is:
\begin{equation}\label{eq:final-cost-function}
\rcost(\rx, \rx') = \sum_{i \in \mathcal{C}} \min_{t \in \feasible_i(x, y)} \cost_i({\x}_i, t) \cdot \frac{1}{2} \|\bar \x_i - \bar \x'_i\|_1 
+ \sum_{j \in \mathcal{I}} c_{j-}(\x)  \cdot [\bar \x_j - \bar \x_j']^{+} + c_{j+}(\x) \cdot [\bar \x_j' - \bar \x_j]^{+}
\end{equation}

\subsection{Adversarial Training with Projected Gradient Descent}
Using the cost model introduced before, we redefine the training optimization problem in \cref{eq:cost-constrained-hard-defense} to generate adversarial examples over a specific instantiation of the convex set $\tilde B_\varepsilon$, as follows:
\begin{equation}\label{eq:cost-constrained-hard-defense-relax}
   \min_{\theta} \max_{\tilde x' \in \tilde B_\varepsilon(\x, y)} \ell(\posclf_\theta(\tilde x'), y),
\end{equation}
where we specify $\tilde B_\varepsilon$ as:
\begin{equation}\label{eq:convex-superset}
    \tilde B_\varepsilon(\x, y) \define \{ \bar \x + \delta \mid \delta \in \sR^d \land \rcost(\bar \x, \bar \x + \delta) \leq \varepsilon  \}.
\end{equation}
Thus, we can rewrite \cref{eq:cost-constrained-hard-defense-relax}:
\begin{equation}\label{eq:cost-constrained-hard-defense-relax-2}
    \begin{aligned}
    \min_{\theta}\quad & \max_{\delta \in \sR^d} \ell(\posclf_\theta(\rx + \delta), y) \\
    & \text{ s.t. } \rcost(\bar \x, \bar \x + \delta) \leq \varepsilon
    \end{aligned}
\end{equation}
This objective can be optimized using standard PGD-based adversarial training~\cite{MadryMSTV17}. Due to the construction of our cost function in \cref{eq:final-cost-function}, we can use existing algorithms for projecting onto a weighted $\lp{1}$-ball~\cite{slav2010ieee,perez2020efficient} with an appropriate choice of weights. As these approaches are standard, we omit them in the main body, and provide the details in~\cref{sec:proj-details}.
 
\subsection{Adversarial Training against a Utility-Bounded Adversary}
For the utility-bounded adversary we propose to use an objective similar to \cref{eq:cost-constrained-hard-defense-relax-2}, but applying individual constraints to different examples:
\begin{equation}\label{eq:cost-constrained-hard-defense-relax-ub}
\begin{aligned}
   \min_{\theta} \quad & \max_{\delta \in \sR^d} \ell(\posclf_\theta(\rx + \delta), y) \\
   & \text{ s.t. } \rcost(\rx, \rx + \delta) \leq \varepsilon(\x) \define [\bar \gain(\rx) - \tau]_+,
\end{aligned}
\end{equation}
where $\bar \gain(\rx)$ is such that $\bar \gain(\rx) = \gain(x)$.
In this formulation, we apply our assumption of invariant gain (see \cref{sec:assumptions}) as $\bar \gain(\rx + \delta) = \bar \gain(\rx)$.

This objective aims to decrease the adversary's utility by focusing the protection on examples with high gain.
The main difference with respect to the cost-constrained objective in \cref{eq:cost-constrained-hard-defense-relax-2} is that here we use a different cost bound for different examples $\varepsilon(\x)$. This formulation enables us to directly use the PGD-based adversarial training to defend against utility-bounded adversaries as well.

\section{Experimental Evaluation}
\label{sec:experiments}
In this section, we show that our graph-based attacks can be used by adversaries to obtain profit, and that our proposed defenses are effective at mitigating damage from these attacks.

\subsection{Experimental Setup}

\newcommand{\ieeecis}{$\mathtt{IEEECIS}$\xspace}
\newcommand{\twitterbot}{$\mathtt{TwitterBot}$\xspace}
\newcommand{\homecredit}{$\mathtt{HomeCredit}$\xspace}

\subsubsection{Datasets} 
We perform our experiments on three tabular datasets which represent real-world applications for which adversarial examples can have social or economic implications:
\begin{itemize}
    \item \twitterbot~\cite{GilaniKC17}. The dataset contains information about more than 3,400 Twitter accounts either belonging to humans or bots. The task is to detect bot accounts. We assume that the adversary is able to purchase bot accounts and interactions through darknet markets, thus modifying the features that correspond to the account age, number of likes, and retweets.
    \item \ieeecis~\cite{ieeecis_kaggle}. The dataset contains information about around 600K financial transactions. The task is to predict whether a transaction is fraudulent or benign. We model an adversary that can modify three features for which we can outline the hypothetical method of possible modification, and estimate its cost: payment-card type, email domain, and payment-device type.
    \item \homecredit~\cite{homecredit_kaggle}. The dataset contains financial information about 300K home-loan applicants. The main task is to predict whether an applicant will repay the loan or default. We use 33 features, selected based on the best solutions to the original Kaggle competition~\cite{homecredit_kaggle}. Of these, we assume that 28 can be modified by the adversary, e.g., the loan appointment time. %
\end{itemize}

\subsubsection{Models} 
We evaluate our attacks against three types of ML models commonly applied to tabular data. First, an $L_2$-regularized \textit{logistic regression (LR)} with a regularization parameter chosen using 5-fold cross-validation. Second, \textit{gradient-boosted decision trees (XGBT)}. Third, \textit{TabNet}~\cite{arik2021tabnet}, an attentive transformer neural network specifically designed for tabular data. We optimize the number of steps as well as the capacity of TabNet's fully connected layers using grid search.

\subsubsection{Adversarial Features}
We assume that the feasible set consists of all positive values of  numerical features and all possible values of categorical features. For simplicity, we avoid features with mutual dependencies and treat the adversarially modifiable features as independent. We detail the choice of the modifiable features and their costs in \cref{sec:cost-models}.

\subsubsection{Metrics}
To evaluate the effectiveness of the attacks and defenses, we use three main metrics:
\begin{itemize}
    \item \textit{Adversary's success rate:} The proportion of correctly classified examples from a test set $X_\mathsf{test}$ for which adversarial examples successfully generated using the attack algorithm $\mathcal{A}(x, y)$ evade the classifier:
    \[
        \Pr_{(\x, y) \sim X_\mathsf{test}} [ \decision(\mathcal{A}(x, y)) \neq y \land \decision(\x) = y]\,.
    \]
    \item \textit{Adversarial cost:} Average cost of successful adversarial examples:
    \[
        \E_{(\x, y) \sim X_\mathsf{test}} [ c(x, \mathcal{A}(x, y)) \mid \decision(\mathcal{A}(x, y)) \neq y \land \decision(\x) = y]\,.
    \]
    \item \textit{Adversarial utility:} Average utility (see \cref{eq:utility-defn}) of successful adversarial examples:
    \[
        \E_{(\x, y) \sim X_\mathsf{test}} [u_{x, y}(\mathcal{A}(x, y)) \mid \decision(\mathcal{A}(x, y)) \neq y \land \decision(\x) = y].
    \]
\end{itemize}
In all cases, we only consider correctly classified initial examples which enables us to distinguish these security metrics from the target model's accuracy.
We introduce additional metrics in the experiments when needed.

\subsection{Attacks Evaluation}\label{sec:attack-eval}
We evaluate the attack strategy proposed in \cref{sec:attacks} in terms of its effectiveness, and empirically justify its design.

\subsubsection{Design Choices of the Universal Greedy Algorithm}
When designing attack algorithms in the BFS framework (see \cref{algo:bfs}), there are two main design choices: the scoring function, and the beam size. We explore different configurations and show that our parameter choices for the Universal Greedy attack produce high-quality adversarial examples.

\parait{Beam size. } We define the beam size of the Universal Greedy attack to be one. The other options that we evaluate are 10 and 100.
We evaluate them by running three types of attacks: cost-bounded for three cost bounds $\varepsilon$, and utility-bounded at the breakeven margin $\tau = 0$. %
We compute two metrics: attack success, and the success-to-runtime ratio. This ratio represents how much time is needed to achieve the same level of success rate using each choice of the beam size. This metric is more informative for our evaluation than runtime, as the runtime is simply proportional to the beam size.

For feasibility reasons, we use two datasets: \twitterbot and \ieeecis. We aggregate the metrics across the three models (LR, XGBT, TabNet), and report the average. The results on \twitterbot are equivalent to the results on \ieeecis, thus for conciseness we only report \ieeecis results.

We find that the success rates are equal up to the percentage point for all choices of the beam size. We show the detailed numeric results in \cref{tab:beam-size} in the Appendix. As the smallest beam size of one is the fastest to run, it demonstrates the best success/time ratio, therefore, is the best choice.

\parait{Scoring function.} Recall from \cref{eq:ug-scoring-func} that the scoring function is the cost-weighted increase in the target classifier's confidence,
which aims to maximize the increase in classifier confidence at the lowest cost.

Other choices for the scoring function $s(v,t)$ could be:
\begin{itemize}
    \item \textit{$A^*$ algorithm}~\cite{Korf85, DechterP85, KulynychHST18}:
    $s(v, t) = \cost(v, t) + \lambda \cdot h(t),$
    where $h(t)$ is a heuristic function, which estimates the remaining cost to a solution and $\lambda > 0$ is a greediness parameter~\cite{Pohl70}. This scoring function balances the current known cost of a candidate and the estimated remaining cost.
    We choose the model's confidence for the positive class, $h(\x) = \posclf(x)$, as a heuristic function. Intuitively, this works as a heuristic, because the lower the confidence for the positive class, the more likely we are close to a solution: an example classified as the target class.
    \item \textit{Potential Search} (PS)~\cite{SternPF11, SternFBPSG14}:
    $s(v, t) = \nicefrac{h(t)}{(\varepsilon - c(v, t))}\,,$
    which additionally takes into account the cost bound $\varepsilon$, thus becoming more greedy (i.e., optimizing $s(v, t) = \lambda \cdot h(t)$ with $\lambda \approx \nicefrac{1}{\varepsilon}$) when the cost of the current candidate leaves a lot of room within the $\varepsilon$ budget. We also choose $h(\x) = \posclf(x)$ as a heuristic function.
    \item \textit{Basic Greedy}:
    $s(v, t) = \nicefrac{-\posclf(t)}{c(s, t)}\,,$
    which aims to maximize the classifier's confidence, yet balance it with the incurred cost. Unlike \cref{eq:ug-scoring-func}, this scoring function does not take into account the relative increase of the confidence, only its absolute value.
\end{itemize}

We evaluate the choice of the scoring function on the \twitterbot and \ieeecis datasets, with the beam size fixed to one. We run the cost-bounded and utility-bounded attacks in the same configuration as before, and measure two metrics averaged over the models: Attack success, and attack success/time ratio.

\cref{tab:scoring-func} shows the results. On \ieeecis, the Universal Greedy outperforms the other choices in terms of success rate and the success/time ratio. On the \twitterbot dataset, it outperforms the other choices in the utility-bounded and unbounded attacks. For cost-bounded attacks, the Universal Greedy offers very close performance to the best option, the Basic Greedy.

\begin{table*}[]
    \caption{\textit{Effect of the scoring-function choice} for graph-based attacks. In the majority of settings, our Universal Greedy scoring function offers the best success rate and performance.}
    \label{tab:scoring-func}
    \centering
    \begin{subtable}[t]{.45\linewidth}
        \caption{\ieeecis}
        \centering
        \begin{tabular}{lrrrr}
        \toprule
        {} & \multicolumn{4}{l}{Adv. success, \%} \\
        Cost bound $\rightarrow$    &               10 &  30 &  Gain &   $\infty$ \\
        Scoring func. $\downarrow$  &                    &        &       &        \\
        \midrule
        UG              &        \textbf{45.32} & \textbf{56.57} & \textbf{56.22} & \textbf{68.20} \\
        A*              &        42.37 & 55.62 & 55.34 & 53.47 \\
        PS              &        \textbf{45.32} & 55.14 & 56.18 &   N/A \\
        Basic Greedy    &        42.37 & 55.46 & 55.38 & 53.82 \\
        \bottomrule
        \end{tabular}
        \\[1em]
        \begin{tabular}{lrrrr}
        \toprule
        {} & \multicolumn{4}{l}{Success/time ratio} \\
        Cost bound $\rightarrow$    &               10 &  30 &  Gain &   $\infty$ \\
        Scoring func. $\downarrow$  &                    &        &       &        \\
        \midrule
        UG              &               \textbf{3.78} & \textbf{4.80} & \textbf{2.53} & \textbf{2.06} \\
        A*              &               3.29 & 3.83 & 1.89 & 1.15 \\
        PS              &               \textbf{3.78} & 4.01 & 2.26 &  N/A \\
        Basic Greedy    &               3.21 & 3.86 & 2.01 & 1.16 \\
        \bottomrule
        \end{tabular}
    \end{subtable}
    \quad
    \begin{subtable}[t]{.45\linewidth}
        \caption{\twitterbot}
        \centering
        \begin{tabular}{lrrrr}
        \toprule
        {} & \multicolumn{4}{l}{Adv. success, \%} \\
        Cost bound $\rightarrow$    &               1,000 &  10,000 &  Gain &   $\infty$ \\
        Scoring func. $\downarrow$  &                    &        &       &        \\
        \midrule
        UG              &        80.24 & \textbf{85.35} & \textbf{21.63} & \textbf{87.00} \\
        A*              &        77.56 & 84.45 & 20.29 & 86.25 \\
        PS              &        79.95 & 85.19 & 21.48 &   N/A \\
        Basic Greedy    &        \textbf{80.40} & 85.04 & \textbf{21.63} & 86.85 \\
        \bottomrule
        \end{tabular}
        \\[1em]
        \begin{tabular}{lrrrr}
        \toprule
        {} & \multicolumn{4}{l}{Success/time ratio} \\
        Cost bound $\rightarrow$    &               1,000 &  10,000 &  Gain &   $\infty$ \\
        Scoring func. $\downarrow$  &                    &        &       &        \\
        \midrule
        UG              &             208.95 & 205.76 & \textbf{64.99} & \textbf{205.31} \\
        A*              &             206.33 & 201.93 & 62.25 & 201.31 \\
        PS              &             205.85 & 203.18 & 63.76 &    N/A \\
        Basic Greedy    &             \textbf{210.20} & \textbf{206.20} & 64.32 & 204.96 \\
        \bottomrule
        \end{tabular}
    \end{subtable}
\end{table*}

\subsubsection{Graph-Based Attacks vs. Baselines}\label{sec:graph_vs_pgd}
We compare the Universal Greedy (UG) algorithm against two baselines:  previous work, and the minimal-cost adversarial examples.

\parait{Previous Work: PGD.} As our cost model differs from the existing approaches to attacks on tabular data, we fundamentally cannot perform a fully apples-to-apples comparison against existing attacks (see \cref{sec:attacks}). To compare against the high-level ideas from prior work, we follow the spirit of the attack by \citet{ballet2019imperceptible}, which modifies the standard optimization problem from \cref{eq:adv-example-classical} to use correlation-based weights. We adapt the standard $L_1$-based PGD attack~\cite{MadryMSTV17, maini2020adversarial} to (1) support categorical features through discretization, and (2) use weighted $L_1$ norm following our derivations in \cref{sec:defenses-relax}. We provide a detailed description of this adaptation in \cref{algo:pgd-alg} in \cref{sec:experiments-extra}.

We run attacks using PGD with 100 and 1,000 steps, and compare it to UG (\cref{sec:attacks}) on the \twitterbot and \ieeecis datasets. As PGD can only operate on differentiable models, in this comparison we only evaluate the performance of the attacks against TabNet.

We run the cost-bounded attacks using two values of the $\varepsilon$ bound, specific to each dataset (see \cref{sec:experiments-extra} for the exact attack parameters). As before, we also run a utility-bounded attack at the breakeven margin $\tau = 0$.
We measure the success rates of the attacks, as well as the average cost of the obtained adversarial examples.
For conciseness, we do not report the results on \twitterbot, as they find they are equivalent to those on \ieeecis. 

\begin{figure}[t]
    \centering
    \includegraphics[width=0.6\linewidth]{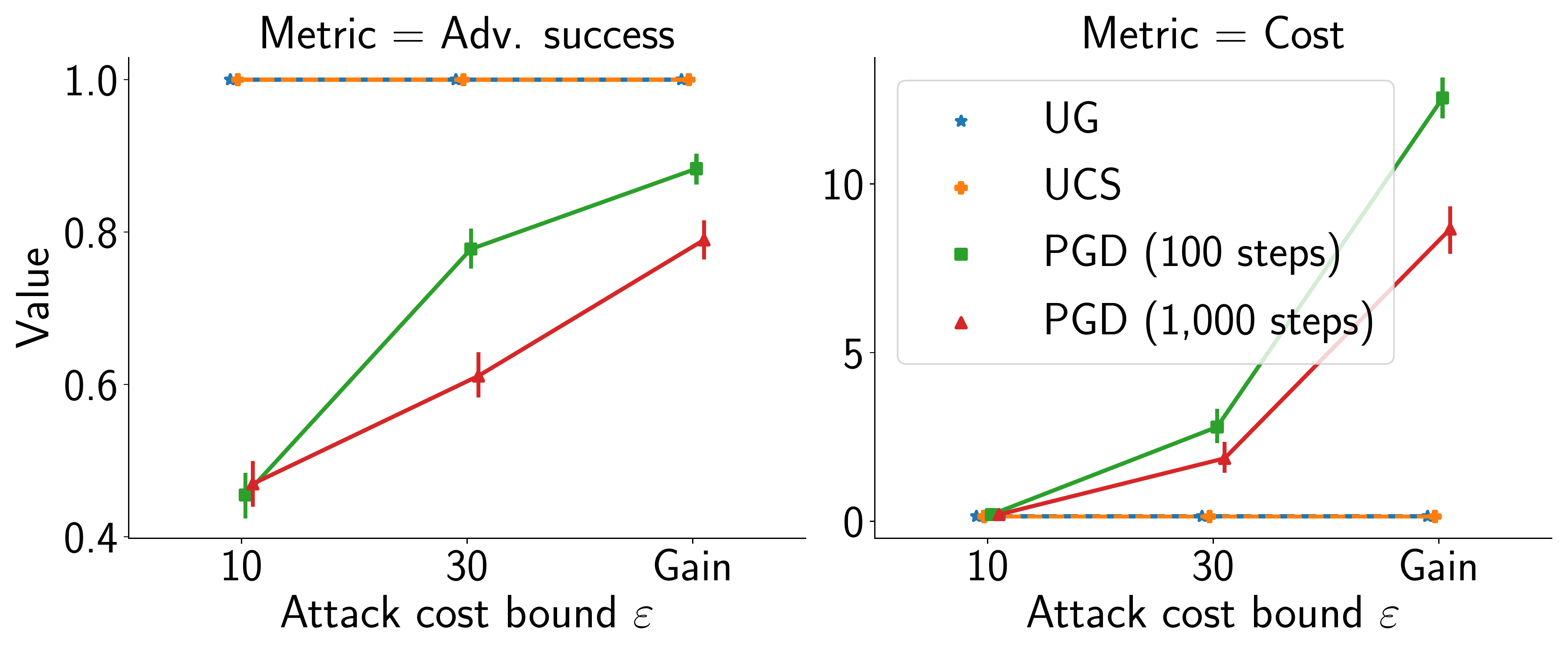}
    \caption{\textit{Universal Greedy attack vs Baselines.} Left: Attack success rate (higher is better for the adversary). Right: Attack cost (lower is better for the adversary). For all cost bounds, our graph-based attack outperforms standard PGD and returns close to optimal-cost adversarial examples (obtained with Uniform-Cost Search, UCS).}
    \label{fig:pgd-comparison}
\end{figure}

\cref{fig:pgd-comparison} shows that the UG attack consistently outperforms the PGD-based baseline both in terms of the success rate and the costs. Our attacks are superior even when the PGD-based baseline produces feasible adversarial examples.

\parait{Minimal-Cost Adversarial Examples.} As UG is a greedy algorithm, we additionally evaluate how far are the obtained adversarial examples from the optimal ones in terms of cost. For this, we compare the results from UG to a standard Uniform-Cost Search (UCS)~\cite{KulynychHST18}. UCS is an instantiation of the BFS framework (see \cref{sec:attacks}) with unbounded beam size, and the scoring function equal to the cost: $s(v, t) = c(v, t)$. In our setting, UCS is guaranteed to return optimal solutions to the following optimization problem:
\begin{equation}\label{eq:min-cost-problem}
    \min_{\x' \in \feasible(\x, y)} c(\x, \x') \quad\text{ s.t. }\decision(\x') \neq y
\end{equation}
\cref{fig:pgd-comparison} shows that UG has almost no overhead over the minimal-cost adversarial examples on TabNet ($1.03\times$ overhead on average). In fact, the average and median cost overhead is $1.80\times$ and $1\times$ over all models, respectively. There exist some outlier examples, however, with over $100\times$ cost overhead. We provide more information on the distribution of cost overhead in \cref{sec:experiments-extra}.

\subsubsection{Performance against Undefended Models}
Having shown that the attacks outperform the baseline, and the design choices are sound, we demonstrate that the attacks bring some \emph{utility} to the adversary. In this section, we evaluate the attacks in a non-strategic setting: the models are not deliberately defended against the attacks. For conciseness, we only evaluate cost-bounded attacks, as the next section provides an extensive demonstration of utility-bounded attacks.

In \emph{all} evaluated settings, the attacks have non-zero success rates and achieve non-zero adversarial utility. \cref{fig:attacks-regular} show the results of cost-bounded attacks for \ieeecis and \homecredit datasets. %
We omit the results for LR on \homecredit as this model does not perform better than the random baseline. An average adversarial example obtained using the cost-bounded objective brings a profit of \$125 to the adversary when attacking the \ieeecis TabNet model, and close to $100\%$ of examples in the test data can be turned into successful adversarial examples. 

Although for all models we see non-zero success and utility, some models are less vulnerable than others, even without any protection. For example, the success rate of the adversary against LR on \ieeecis is much lower than against TabNet (at least 50 p.p. lower). This model, however, is also comparatively inaccurate, with only $62\%$ classification accuracy.

\begin{figure*}
    \captionsetup[subfigure]{justification=centering}
    \centering
    \begin{subfigure}[t]{.62\linewidth}
    \centering
    \includegraphics[width=\linewidth]{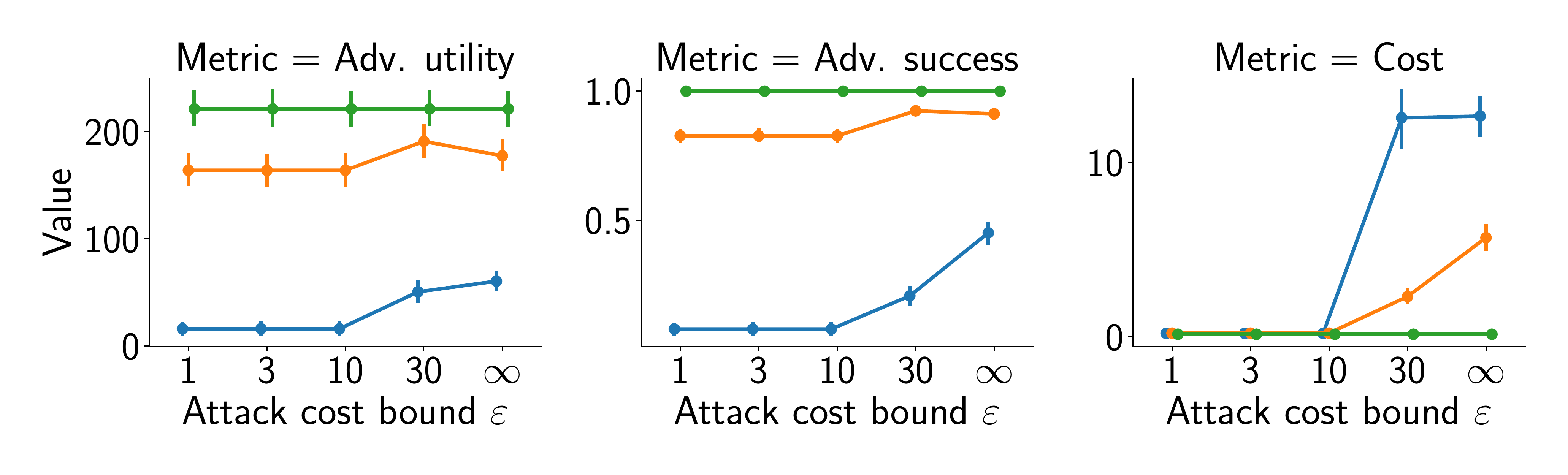} \\
    \vspace{-.5em}
    \caption{\ieeecis. Model (test acc.): 
    \textcolor{divergent1}{$\bullet$}~LR (0.62) \, \textcolor{divergent2}{$\bullet$}~XGBT (0.83) \, \textcolor{divergent3}{$\bullet$}~TabNet (0.77)}
    \vspace{1em}
    \end{subfigure}
    \begin{subfigure}[t]{.62\linewidth}
    \hspace{-1.1em}
    \includegraphics[width=\linewidth]{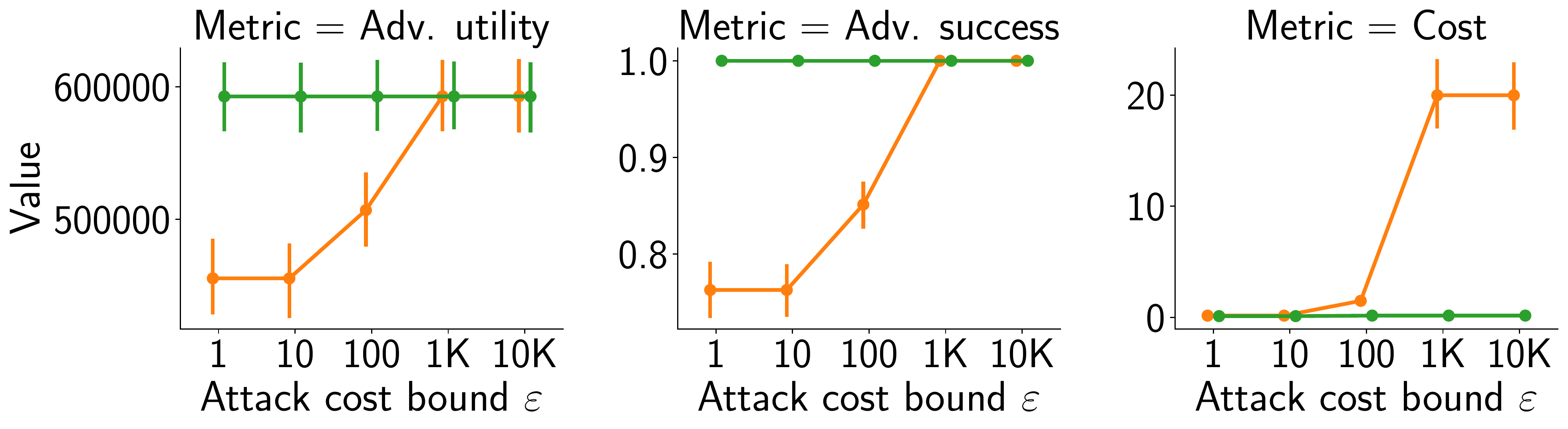}
    \vspace{-.1em}
    \caption{\homecredit. Model (test acc.): 
    \textcolor{divergent2}{$\bullet$}~XGBT (0.65)\, \textcolor{divergent3}{$\bullet$}~TabNet (0.68)}
    \end{subfigure}
    \caption{Results of cost-bounded graph-based attacks against three types of models. Left pane: Adversarial utility (higher is better for the adversary). Middle and right panes: See \cref{fig:pgd-comparison}. 
    On \ieeecis, the attack can achieve utility from approximately \$10 to \$125 per attack depending on the target model. On \homecredit, the average utility ranges between \$400,000 and \$600,000.}
    \label{fig:attacks-regular}
\end{figure*}

\subsection{Evaluation of Our Defense Methods}\label{sec:defense-eval}
We evaluate the defense mechanisms proposed in \cref{sec:defenses} in two scenarios. First, a scenario in which the adversary's objective used by the defender for adversarial training---cost-bounded (CB) or utility-bounded (UB)---matches the attack that will be deployed by the adversary.
Second, a scenario in which the defender models the adversary's objective incorrectly, and uses a different attack than the adversary when performing adversarial training.

\parabf{Baselines.} We set two comparison baselines which provide boundaries for which a defense can be considered effective.
On the accuracy side, we consider the \textit{clean baseline}: a model trained without any defense. It provides the best accuracy, but also the least robustness. Any defense that does not achieve at least the clean baseline's \emph{robustness} should not be considered, as the clean baseline would always provide better or equal accuracy, thus a better robustness-accuracy trade-off.
On the robustness side, we consider the \textit{robust baseline}: a model for which all features that can be changed by the adversary are masked with zeroes for training and testing. As this removes any adversarial input, this model is invulnerable to attacks within the assumed adversarial models. Any practical defense must outperform the robust baseline in terms of \emph{accuracy}. Otherwise, the robust baseline would provide a better trade-off. 

\cref{tab:baselines}
shows the clean and robust baselines' accuracy for the three datasets. On \twitterbot, the robust baseline performs almost as well as the clean model. As there is no space for a better defense for \twitterbot, we only evaluate our defenses for the \ieeecis and \homecredit models.

We train our attacks and defenses using the parameters listed in \cref{tab:hpar-homecredit} in \cref{sec:experiments-extra}.

\begin{table}[t]
     \centering
     \small
    \caption{Baseline performance in terms of average test accuracy.}
    \begin{tabular}{@{}lccc}
    \textbf{} & \textbf{\twitterbot} & \textbf{\ieeecis} & \textbf{\homecredit}\\
    \midrule
    \textit{Clean baseline} & 0.775 & 0.755 &  0.680 \\
    \textit{Robust baseline} & 0.773 & 0.685 & 0.556 \\
    \textit{Random baseline} & 0.566 & 0.500 & 0.501 \\
    \end{tabular}\\
\label{tab:baselines}
\end{table}

\begin{figure*}[t]
    \centering
    \begin{subfigure}[t]{\linewidth}
    \hspace{1.45em}
    \includegraphics[width=0.36\linewidth]{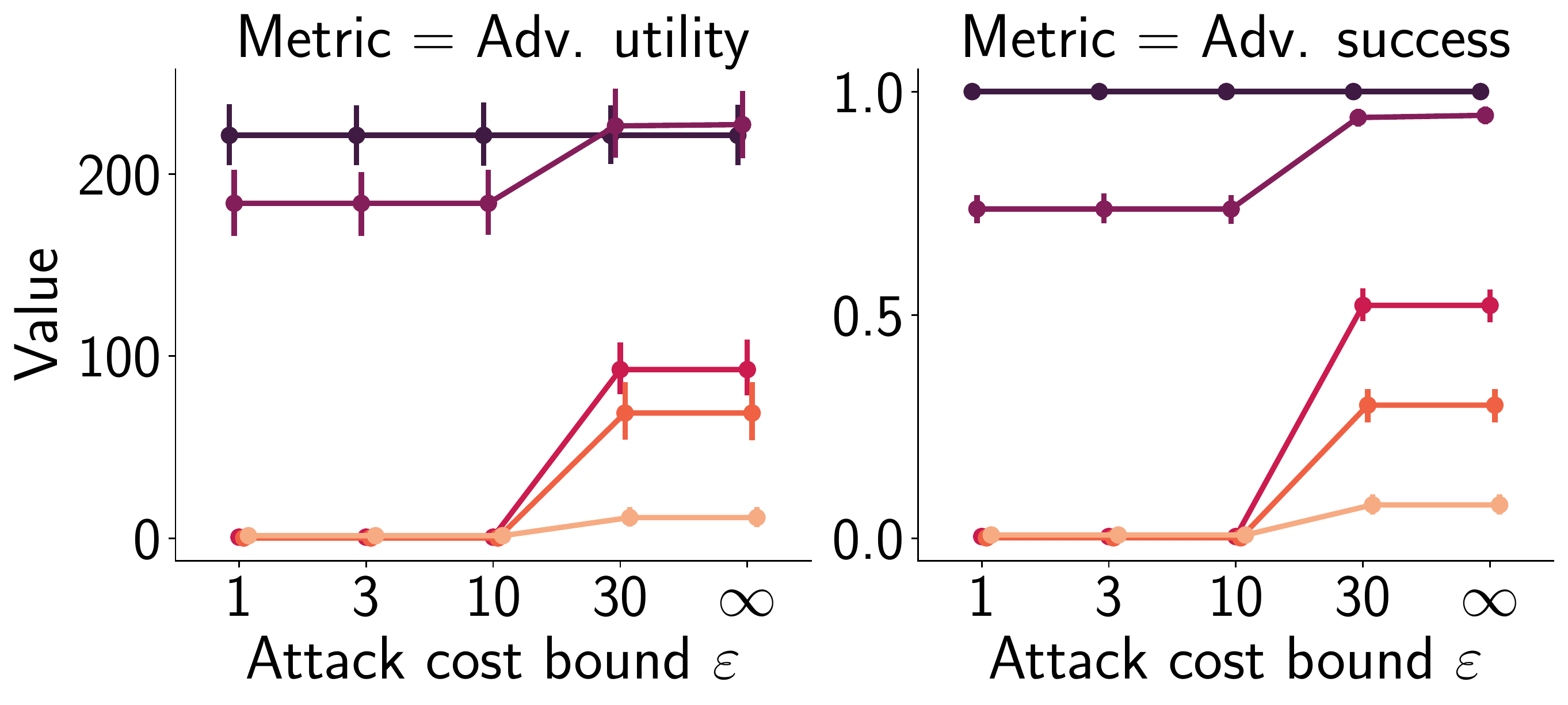}
    \hspace{1.5em}
    \includegraphics[width=0.56\linewidth]{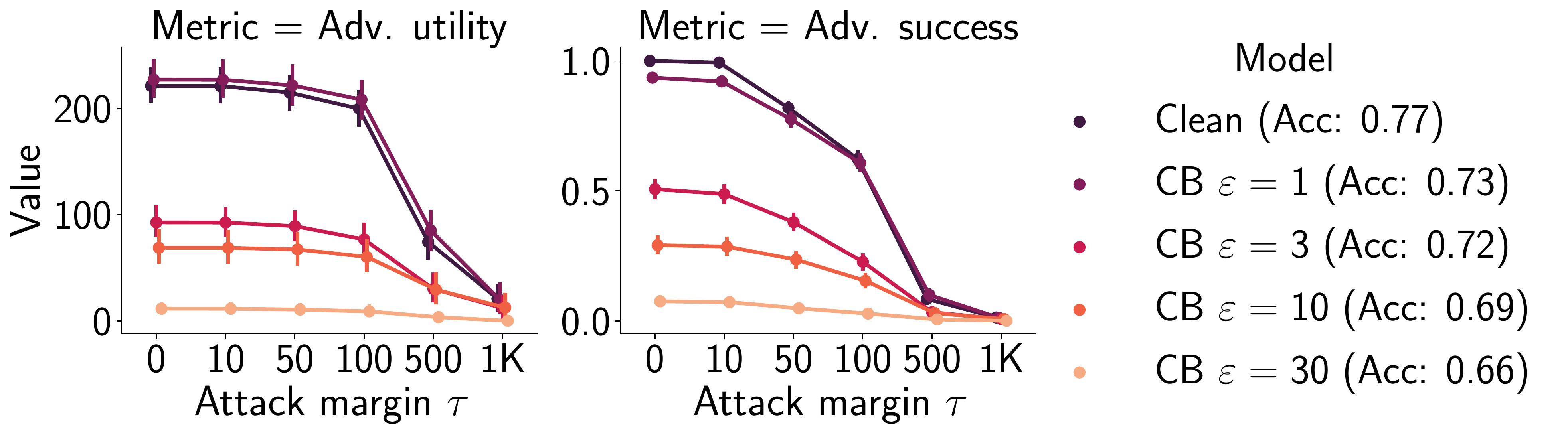}
    \caption{\ieeecis}
    \vspace{1em}
    \end{subfigure}
    \begin{subfigure}[t]{\linewidth}
    \includegraphics[width=0.41\linewidth]{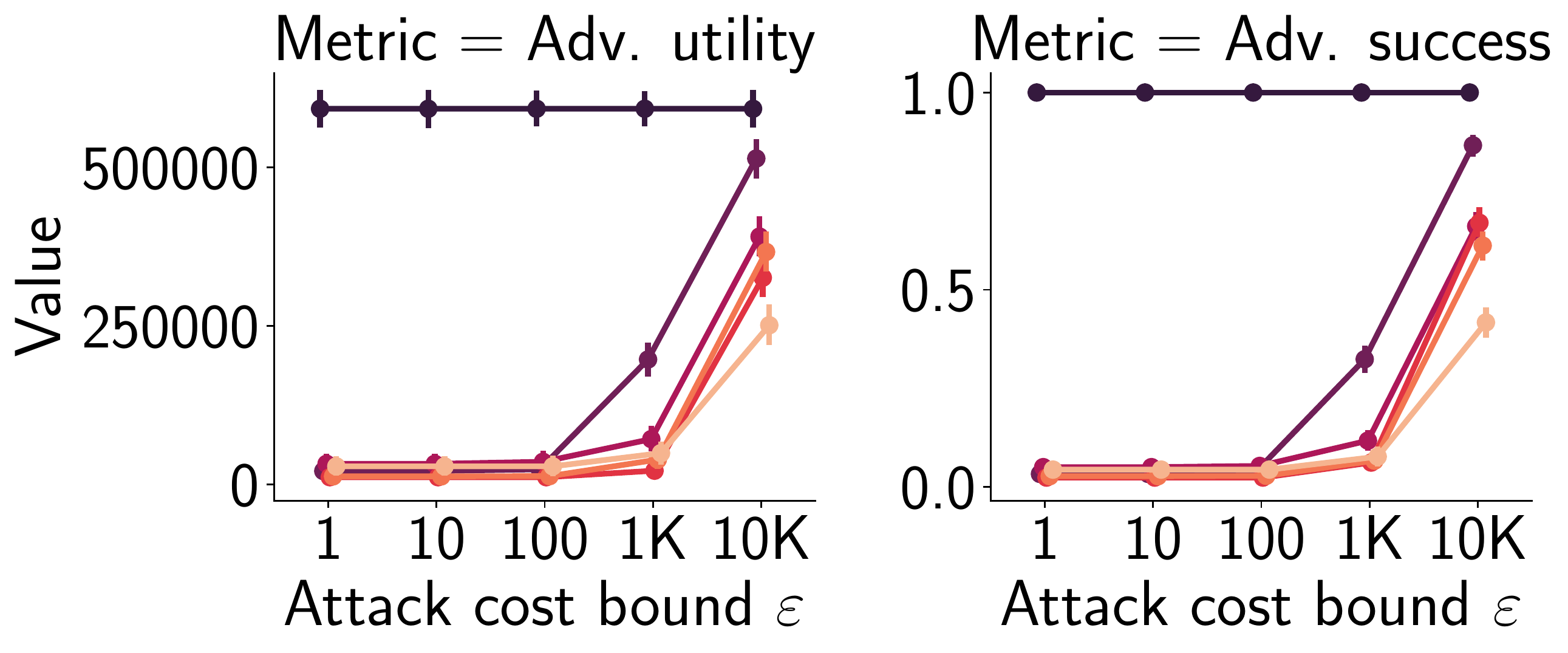}
    ~
    \includegraphics[width=0.59\linewidth]{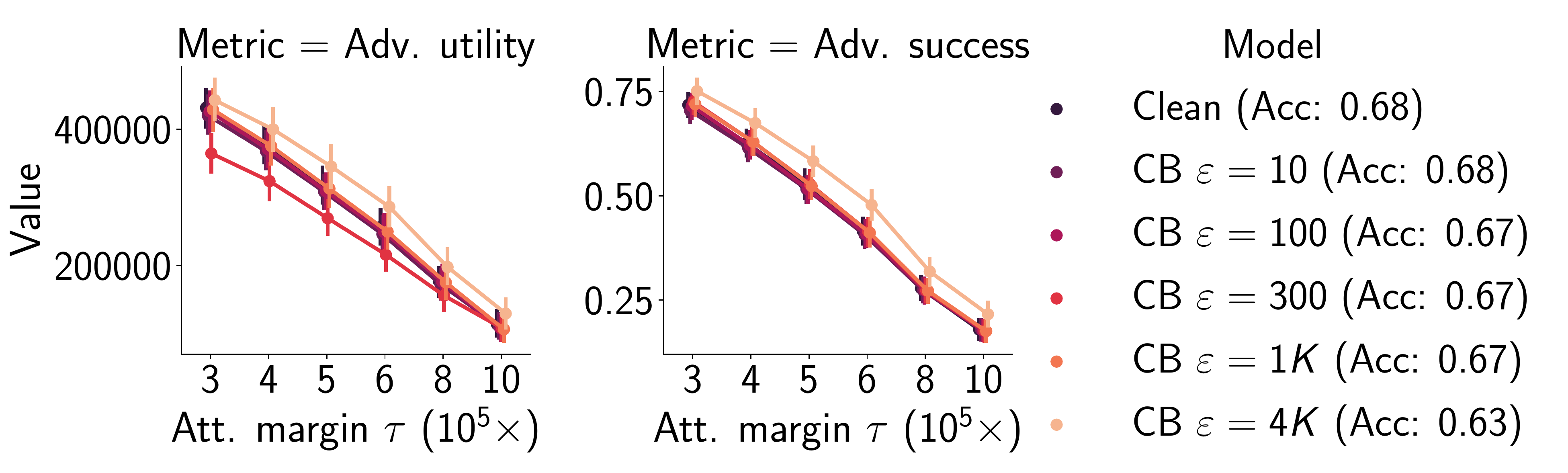}
    \caption{\homecredit}
    \end{subfigure}
    \caption{\textit{Cost-bounded adversarial training} for different adversarial budgets $\varepsilon$. Evaluation against cost-bounded (left), and utility-bounded (right) attacks. We represent the adversary's success and utility (y-axis) versus the adversary's attack budget $\varepsilon$ or desired utility margin $\tau$ (x-axis).
    CB attacks only have substantial success and profit when the adversary invests more than the budget assumed by the defender. UB attacks are thwarted for \ieeecis, but CB training is not significantly effective on \homecredit, and for some models even enables higher adversary's utility.
    }
    \label{fig:defense-cb}
\end{figure*}\begin{figure*}[t]
    \centering
    \begin{subfigure}[t]{\linewidth}
    \hspace{1.45em}\includegraphics[width=0.37\linewidth]{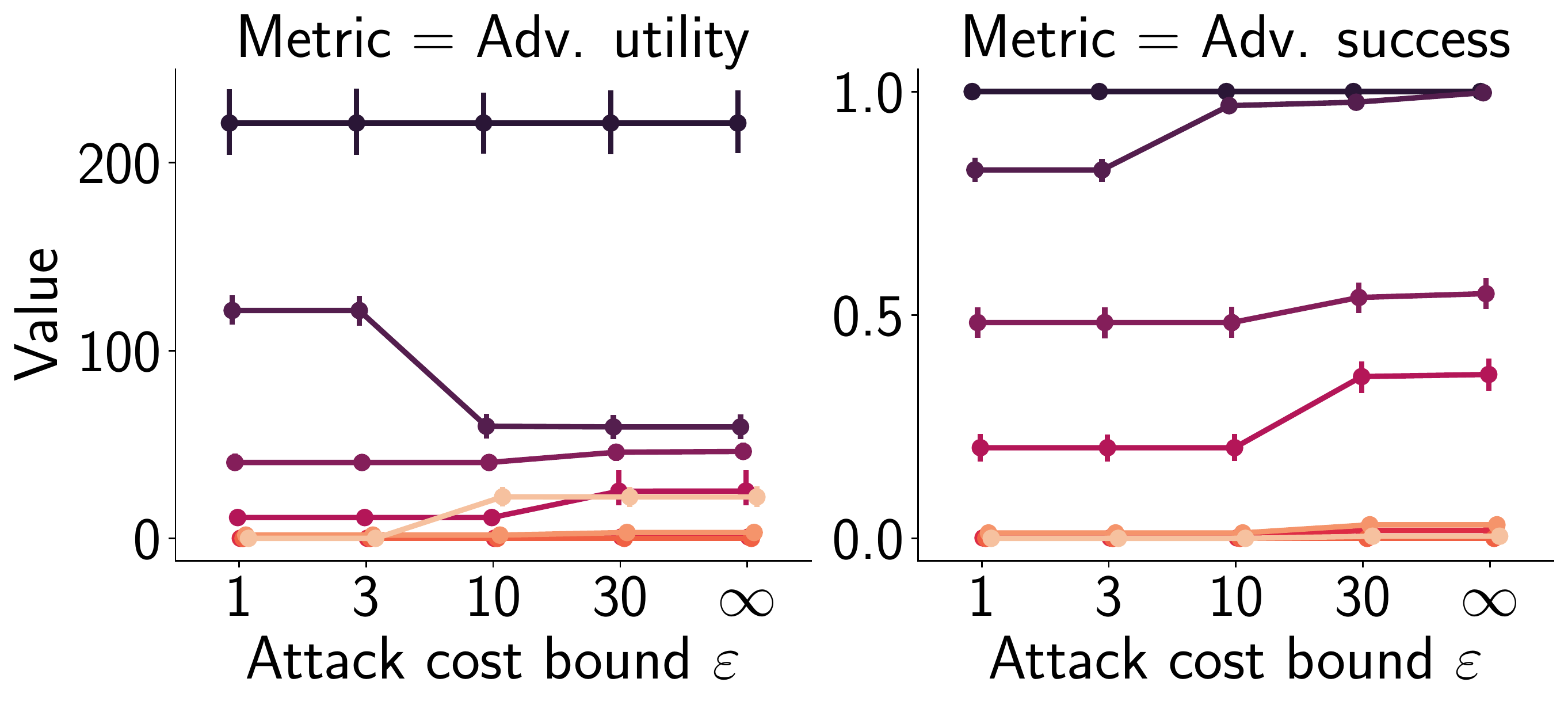}
    \hspace{1.45em}
    \raisebox{-1.3em}{\includegraphics[width=0.58\linewidth]{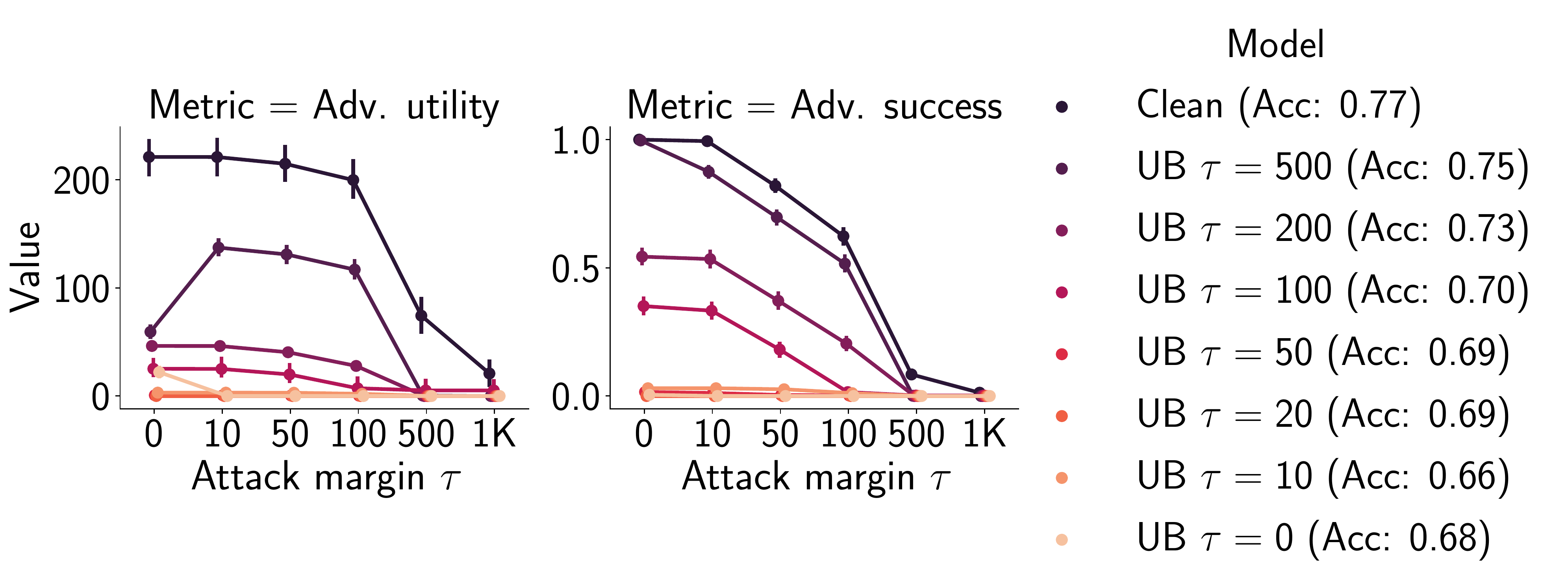}}
    \vspace{-1em}
    \caption{\ieeecis}
    \vspace{1em}
    \end{subfigure}
    \begin{subfigure}[t]{\linewidth}
    \includegraphics[width=0.405\linewidth]{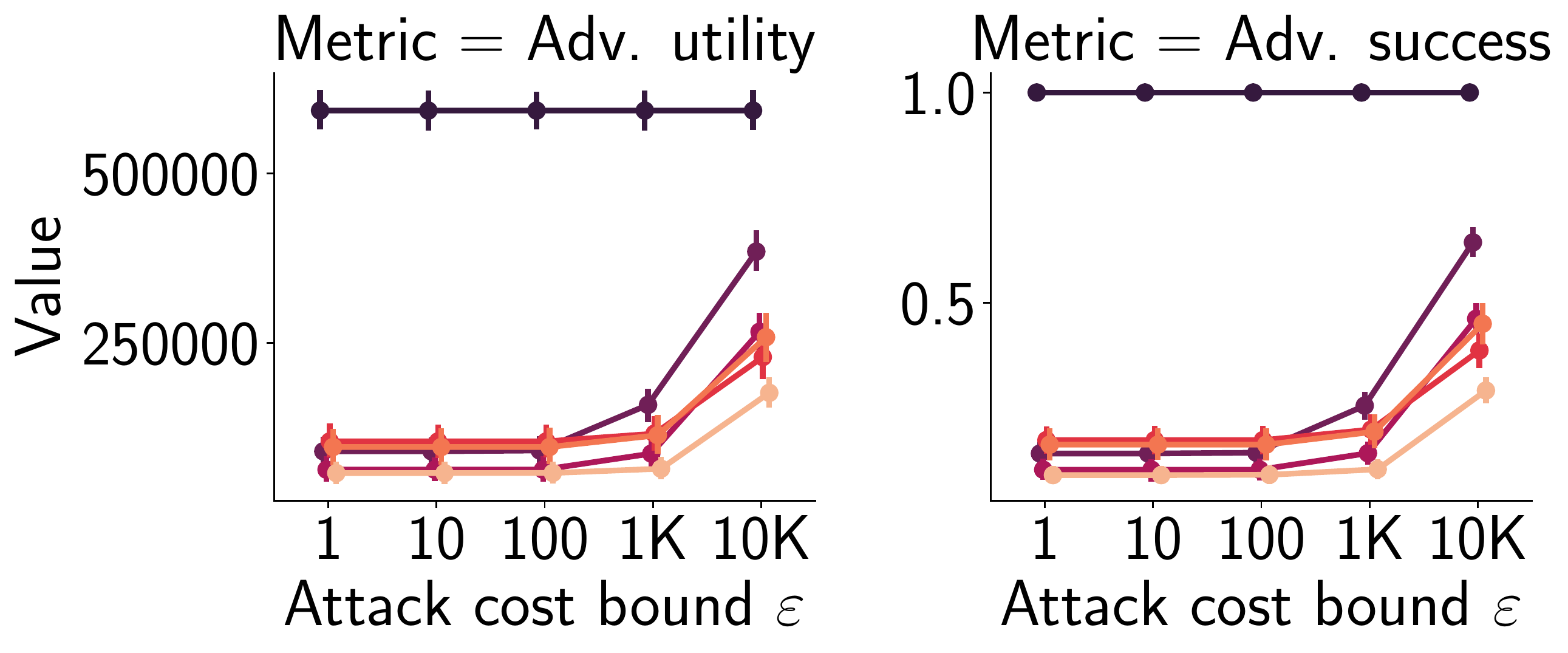}
    ~
    \includegraphics[width=0.595\linewidth]{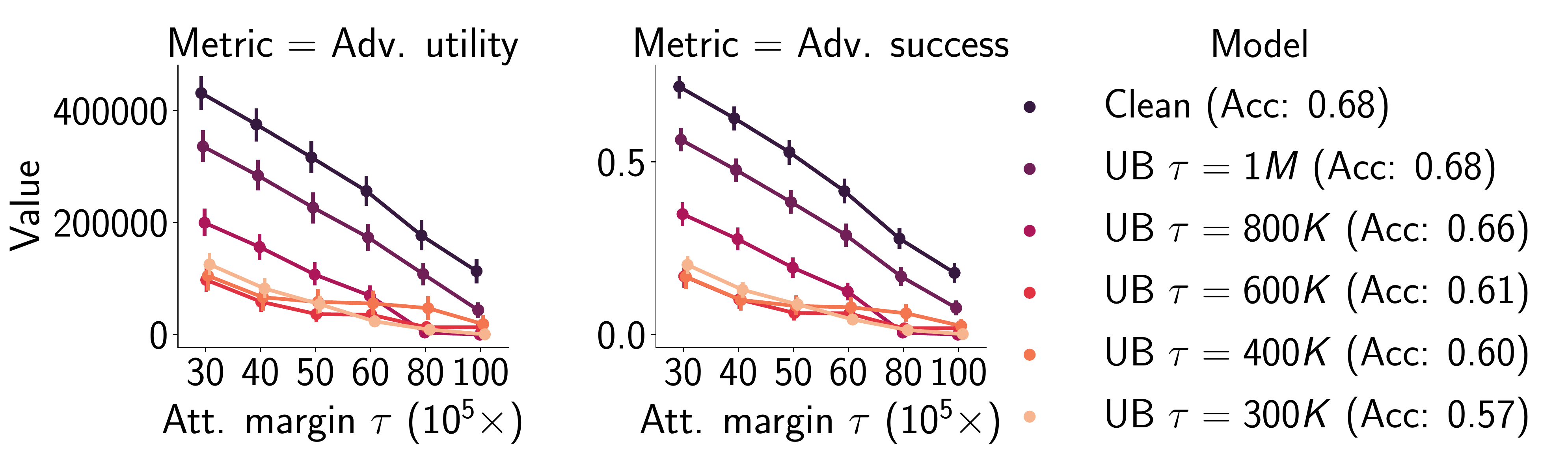}
    \caption{\homecredit}
    \end{subfigure}
    \caption{\textit{Utility-Bounded adversarial training}  for different adversarial utility margins $\tau$. Evaluation against cost-bounded (left) and utility-bounded (right) adversaries.  We show the adversary's success and utility (y-axis) versus the adversary's attack budget $\epsilon$ or desired margin $\tau$ (x-axis).
    On \homecredit, the UB training decreases the performance of both UB and CB attacks, being robustness better against the former. Even when enabling a large profit margin ($\tau=1M$) the attack success rate decreases by 40\%, at the same time not affecting the accuracy.} 
    \label{fig:defense-ub}
\end{figure*}

\subsubsection{Defender Matches the Adversary}
We first evaluate the case in which the adversarial training used to generate the defense is perfectly tailored to the adversary's objective. 

\paragraph{Cost-Bounded Defense vs. Cost-Bounded Attack.}
\cref{fig:defense-cb} shows the results when the defender and the adversary use CB objectives. 
For both \ieeecis and \homecredit, the CB-trained defense is effective when the adversary uses CB attacks:
the adversary only finds successful adversarial examples with positive utility if they invest more than the budget assumed by the defender. If the defender greatly underestimates the adversary's budget of the adversary (e.g., training with $\varepsilon=10$ when the adversary's budget is $\varepsilon=1000$), the adversary obtains a high profit. Therefore, an effective defense requires an adequate estimation of the adversary's capabilities.

\paragraph{Utility-Bounded Defense vs. Utility-Bounded Attack.}
\cref{fig:defense-ub} shows the results of our evaluation when the defender and the adversary use UB objectives. The defense is effective: it decreases both the success rate and the adversary's utility on both datasets. On \ieeecis, the adversary can only succeed when their desired profit $\tau$ is smaller than the $\tau$ used to train the defense. On \homecredit, we observe a similar behaviour, although when training for margins $\tau$ less than 500K, the model does not completely mitigate adversaries that wish to have larger profits. When the defender allows for large adversary's profit margins (e.g., $\tau=800K$ or $\tau=1M$), the models become significantly robust with little accuracy loss.

\subsubsection{Defender Does not Match the Adversary}
In the previous section, we show that if the defender correctly models the attacker's objective, our defenses offer good robustness. Next, we evaluate the performance when the defender's model does not match the adversary's objective. This is likely to happen in realistic deployments, as the defender might not have any a priori knowledge of the adversary's objective.

\paragraph{Utility-Bounded Defense vs. Cost-Bounded Attack.}
\cref{fig:defense-ub} shows our evaluation results when a CB adversary attacks a defense trained assuming UB objectives. For both datasets, the robustness improves with respect to the clean baseline, even though robustness against CB adversaries is not the defense goal. The improvement is more pronounced as the defender tightens the profit margin (decrease in $\tau$). This effect is stronger on \homecredit, where even loose profit margins provide significant robustness. The adversary can increase their success (on both datasets) and utility (only \homecredit) by increasing their budget $\epsilon$. These experiments show that UB training improves robustness \textit{even when the adversary has a different objective}.

\paragraph{Cost-Bounded Defense vs. Utility-Bounded Attack.}
When a CB defense confronts a UB adversary, we observe a different behaviour, showed in \cref{fig:defense-cb}. On \ieeecis, CB adversarial training increases the robustness of the model against UB adversaries, with greater effect as the cost bound increases. When protecting against high adversary's budgets ($\varepsilon=30$), however, the impact on accuracy is too large, and the robust baseline becomes preferable.
For \homecredit, the situation is worse. Although performance is always above the robust baseline, we observe little improvement with respect to the clean model. Even worse, for certain parameters, the utility of the adversary can increase after adversarial training (see the model trained with a bound of $\varepsilon=4000$).
We conclude that CB training is not effective against an adversary with a different objective.

\subsubsection{Robustness-Accuracy Trade-offs}
In the previous sections, we evaluate the effectiveness of the defenses depending on the adversary's and defender's objectives. We now evaluate the trade-offs between defense effectiveness in reducing the adversary's performance on one hand, and the accuracy of the model on the other hand.

As adversarial training penalizes model's sensitivity to modifications of input features, it results in certain features having less influence on the output. These features cannot be used for prediction to the same extent as features in the clean baseline, which leads to the degradation of the model's accuracy. On the positive side, these features can neither be used by the adversary---the robust baseline being extreme in which all features prone to manipulation are zeroed---reducing the attack's success and utility.

In our experiments, we observe different nature of these trade-offs for robustness against CB and UB adversaries. Against the CB adversary, the accuracy-robustness trade-off depends on the modeled adversary's budget, and we do not observe that either defense approach is superior to another. On the contrary, against the UB adversaries, we consistently observe better robustness (less adversarial utility for the same accuracy) for the UB defense compared to the CB one. We present the detailed accuracy-robustness plots in \cref{fig:defense-tradeoffs} in \cref{sec:experiments-extra}.

Thus, in the absence of knowledge of the adversary's objective, utility-bounded defenses are preferable. They outperform CB adversarial training when the adversary is utility-oriented, and offer comparable performance against CB attacks.

\section{Related Work}
\label{sec:related}

Our conceptual contributions span three aspects of adversarial robustness in tabular domains: new formulations of (A) \emph{adversarial objectives}, (B) \emph{attack strategies} within these objectives, and (C) \emph{adversarial training}-based defenses. We review the related work in each of the aspects next. We also provide a concise summary in \cref{tab:related-objectives}.

\begin{table*}[t]
\caption{Summary of related work in terms of three aspects: adversarial models, attack strategies, and defense strategies. Adversarial models: \textit{Adv. cost} --- description of (an equivalent of) an adversarial cost model. \textit{Adv. utility} --- whether adversarial gain is incorporated into the model. Attacks: \textit{Targets} --- which target models can be attacked. \textit{Feasibility} --- whether the attack is guaranteed to produce a feasible adversarial example. \textit{Algorithm} --- a short description of the algorithmic approach. Defenses: \textit{Arch.} --- which model architectures are supported.}
\label{tab:related-objectives}
\resizebox{\linewidth}{!}{
\begin{tabular}{l|ll|lll|l}
\toprule
& Adversarial models & & Attacks & & & Defenses \\
Adversarial Model                                & Adv. cost & Adv. utility & Targets & Feasibility & Algorithm & Arch. \\
\midrule
\citet{ballet2019imperceptible} & Feature-importance based  & --- & Differentiable  & \xmark & Gradient-based & --- \\
\citet{CartellaAFDAE21}         & Feature-importance based  & --- & \textbf{Any}    & \xmark & ZOO & ---  \\
\citet{levy2020not}             & Distance based    & --- & \textbf{Any}    & \xmark & Gradient-based & --- \\
\citet{KantchelianTJ16}         & $L_p$                      & --- & Tree-based     & \cmark & MILP & --- \\
\citet{AndriushchenkoH19}       & $L_\infty$                 & --- & Tree-based     & \cmark & Custom & Tree-based \\
\citet{ChenWJCJ21}              & Per-feature constraints   & --- & ---      & --- & --- & Tree-based \\
\citet{CalzavaraLTAO20}         & Per-feature constraints   & --- & Tree-based      & \cmark & Exhaustive search & Tree-based \\
\citet{VosV21}                  & Per-feature constraints   & --- & ---      & --- & --- & Tree-based \\ 
Ours                            & \textbf{Generic} (\cref{sec:assumptions}) & \cmark & \textbf{Any} & \cmark & Graph search & Differentiable \\
\bottomrule
\end{tabular}
}
\end{table*}

\subsection{Adversarial Objectives}
In this part, we review the related adversarial objectives as well as some approaches which are similar in spirit to our adversarial objectives. 

\paragraph{Cost-Based Objectives.}
\label{sec:l_inf}
Our generic cost-bounded objective is not the only possible approach to model attacks in tabular domains.
For example, works on adversarial robustness in the context of decision tree-based classifiers often use per-feature constraints as adversarial constraints~\cite{ChenWJCJ21, AndriushchenkoH19, ChenZBH19}. At the low level, these constraints are formalized either as bounds on $L_\infty$ distance~\cite{AndriushchenkoH19, ChenZBH19}, or using functions determining constraints for each feature value~\cite{ChenWJCJ21}. In these approaches, the feature constraints are independent. Such independence simplifies the problem. For example, the usage of $L_\infty$ constraints enables to split a multidimensional optimization problem into a combination of simple one-dimension tasks~\cite{AndriushchenkoH19}, or to limit the set of points affected by the split change \cite{ChenWJCJ21}. %
Unfortunately, per-feature constraints cannot realistically capture the \emph{total cost} of mounting an attack: the aggregate cost of all the feature modifications required to produce an adversarial example, which is crucial to capture in tabular domains.

Also related to our cost-based proposal, \citet{Pierazzi2020IntriguingPO} introduce a general framework for defining attack constraints in the \textit{problem space}. Our cost-based objective can be thought as an instance of this framework: we encode the problem-space constraints in the set of feasible examples.

Our cost model resembles the Gower distance~\cite{gower1971general}, which is also a sum of ``dissimilarities'' across different categories of features. As opposed to this distance, our cost model can accommodate a wider class of numeric features, e.g., with a non-linear cost of changes. Also, it is not bounded to $[0,1]$ interval providing flexibility to model a wider range of applications. 

\paragraph{Utility-Based Objectives.} The literature on \newterm{strategic classification} also considers utility-oriented objectives~\cite{HardtMPW16, DongRSWW18, Milli19} for their agents. In this body of work, however, agents are not considered adversaries, and the gain is typically limited to \{$+1$,$-1$\} reflecting the classifier decision. Our model supports arbitrary gain values, which enables us to model broader interests of the adversary such as revenue. Only the work by \citet{SundaramVXY21} supports gains different from $+1$ or $-1$, but they focus on PAC-learning guarantees in the case of linear classifiers, whereas our goal is to provide practical attack and defense algorithms for a wider family of classifiers.

\subsection{Attack Strategies}

\paragraph{Tabular Domains. }
Several works have proposed attacks on tabular data. 
\citet{ballet2019imperceptible} and \citet{CartellaAFDAE21} propose to apply existing continuous attacks to tabular datasets. The authors focus on crafting imperceptible adversarial examples using standard methods from the image domain. They adapt these methods such that less ``important'' features (low correlation with the target variable) can be perturbed to a higher degree than other features. This corresponds to a special case within our framework, in which the feature-modification costs depend on the feature importance, with the difference that these approaches cannot guarantee that the proposed example will be feasible. 
\citet{levy2020not} propose to construct a surrogate model capable of mimicking the target classifier. A part of this surrogate model is a feature-embedding function which maps tabular data points to a homogeneous continuous domain. They apply projected gradient descent to produce adversarial examples in the embedding space and map the resulting examples back to the tabular domain. As opposed to our methods, Levy et al. cannot provide any guarantee that the produced adversarial example lay in the feasible set.
Finally, \citet{KantchelianTJ16} propose a MILP-based attack and its relaxation within different $\lp{p}$ cost models against random-forest models.
Our attack differs from these three methods as they use $\lp{p}$ or similar bounds, whereas we use a cost bound that can capture realistic constraints as explained in \cref{sec:intro,sec:statement}.

\paragraph{Text Domains. } Our universal greedy attack algorithm is similar to the methods for attacking classifiers that operate on text~\cite{ZhangSAL19, YangCHWJ20, WangHBSMLZ20, WangPLL19, LeiWCDDW18, EbrahimiRLD18, LiangLSBLS18}. %
All these works, however, make use of adversarial constraints such as restrictions on the number of modified words or sentences. These constraints do not apply to tabular domains, as simply considering ``number of changes'' does not address the heterogeneity of features. Our algorithms also differ from these approaches in that we incorporate complex adversarial costs in the design of the algorithms.
For example, the Greedy attack by \citet{YangCHWJ20} uses the target classifier's confidence for choosing the best modifications to create adversarial examples while accounting for the number of modifications. Our framework not only considers the number of modifications but also their cost, thus capturing richer constraints of the adversary.

\subsection{Adversarial Training}
We discuss existing defense methods and techniques with related goals, which appear in the context of decision tree-based models. Adversarial robustness of such classifiers has been studied extensively~\cite{ChenZBH19, AndriushchenkoH19, CalzavaraLTAO20, ChenWJCJ21, VosV21}. These works assume independent per-feature adversarial constraints, e.g., based on the $L_\infty$ metric. Our adversarial models, and thus our attacks and defenses, are capable of capturing a broader class of adversarial cost functions that depend on feature modifications and better model the adversary's constraints as we explain in \cref{sec:l_inf}.

\section{Concluding Remarks and Future Work}
In this paper, we have revisited the problem of adversarial robustness when the target machine-learning model operates on tabular data. We showed that previous approaches, tailored to produce adversarial image or text examples, and defend from them, perform poorly when used in tabular domains. This is because they are conceived within a threat model that does not capture the capabilities and goals of the tabular adversaries.

We introduced a new framework to design attacks and defenses that account for the constraints existing in tabular adversarial scenarios: adversaries are limited by a budget to modify features, and adversaries can assign different utility to different examples. Having evaluated these attacks and defenses on three realistic datasets, we showed that our novel utility-based defense not only generates models which are robust against utility-aware adversaries, but also against cost-bounded adversaries. On the contrary, performing adversarial training considering a cost-bounded adversary---as traditionally done in the literature---is a poor defense against adversaries focused on utility in some scenarios.

Although our high-level cost and utility-based framework is designed for tabular data, it can be applied to any case where possible adversarial actions can be modeled using generic costs, and where different attacks can have a different value for the adversary. As an example, our adversarial model could be applied to defending against attacks on a visual traffic sign recognition~\cite{sitawarin2018darts}, where the attack harms could vary wildly from the misclassification of one type of road sign to another.

The main limitation of our work is that the effectiveness of the defenses relies on strong assumptions. The defender has to not only correctly model the adversary's objective, but also to estimate the adversary's cost model, budget, and how much value they ascribe to the success of each attack. More work is needed to understand the implications of misspecifications of this model, and to design algorithms that can provide protection even under misspecifications.

\section*{Acknowledgements}
This work was partially funded by the Swiss National Science Foundation with grant 200021-188824. The authors would like to thank Maksym Andriushchenko for his helpful feedback and discussions.

\bibliographystyle{unsrtnat}
\bibliography{main}

\appendix
\crefalias{section}{appendix}
\section{Other Possible Adversarial Objectives}\label{app:extra-discussion}

We propose a cost-oriented and a utility-oriented adversarial objective in \cref{sec:statement}. These are not the only possible formalizations for our high-level goals. One other approach is an adversary maximizing utility subject to a cost budget:
\begin{equation}\label{eq:max-util-ct}
    \begin{aligned}
    \max_{\x \in \feasible(\initx, y)} & \id[\decision(\x) \neq y] \cdot u_{\x, y}(\x') \\
        & = \id[\decision(\x) \neq y] \cdot \left[ \gain(\x') - \cost(\initx, \x') \right]_+ \\
        & \text{s.t. } \cost(\initx, \x') < \varepsilon
    \end{aligned}
\end{equation}

This formalization is a middle ground between our cost-constrained and utility-constrained objectives: On the one hand, the adversary is aware of the utility of a given example. On the other hand, they do not adjust their budget for different examples, i.e. the constraint for \$10 and \$1,000 stays the same, even though the adversary clearly differentiates in their value. We conducted preliminary experiments with this objective, and its results are marginally different from the cost-bounded one in our experimental setup.

\newcommand{\highlightcolor}{blue!70}
\section{Details on Adversarial Training}
\label{sec:proj-details}
We describe our modifications to the traditional adversarial training pipeline.
Our training procedure is a version of the well-known adversarial training algorithm based on the PGD method~\cite{MadryMSTV17}. 
We design an adapted projection algorithm to solve~\cref{eq:cost-constrained-hard-defense-relax}, presented in \cref{algo:cost-proj}. This algorithm is an extension of an existing sort-based weighted $\lp{1}$ projection algorithm~\cite{slav2010ieee,perez2020efficient}. It takes as input a sample $\rx$ and a perturbed sample $\rx'$, and returns a valid perturbation vector $\delta$ such that $\rx + \delta$ lies within the cost budget. Compared to the algorithm by \citet{perez2020efficient}, we introduce the capability to assign different weights based on the feature type and perturbation sign (line 2, in \textcolor{\highlightcolor}{blue}) to support our cost function in~\cref{eq:final-cost-function}.

\begin{algorithm}[t]
    \caption{Cost-bounded Adversarial Training Algorithm (single iteration)}
    \label{algo:pgd-train}
	\textbf{Input:} Model weights $\theta$, batch of training examples $(\enc({\x}^{(i)}), y^{(i)})_{i=1}^{b}$, 
	per-feature costs $w_i$, cost bound $\varepsilon$, number of PGD steps $t$. \\
	\textbf{Output:} Updated weights $\theta'$
    \begin{algorithmic}[1]
        \State $\alpha := 2 \frac{\varepsilon}{n}$
        \For{ $i \textbf{ in } 1, \ldots, b$}
        \State $\delta^{(i)} := 0$
        \For{ $t \textbf{ in } 1, \ldots, n$}
        \State $g^{(i)} := \nabla_{\delta_i} \ell(f_{\theta}(\enc({\x}^{(i)})+\delta^{(i)}), y_i)$
        \State $\delta^{(i)} := \delta^{(i)} + \alpha \frac{g^{(i)}}{\|g^{(i)}\|_1}$
        \State $\delta^{(i)} := P_{\tilde B_\varepsilon(\x^{(i)}, y^{(i)})}(\enc({\x}^{(i)}) + \delta^{(i)})$
        \EndFor
        \EndFor
    \State $\theta' := \theta - \eta \nabla_{\theta} \frac{1}{b} \sum_{i=1}^b \ell(f_{\theta}(\enc({\x}^{(i)})+\delta^{(i)}), y^{(i)})$
    \end{algorithmic}
    \hspace*{\algorithmicindent} \textbf{Return} $\theta'$
\end{algorithm}

We now prove the correctness of this algorithm.
\begin{statement}\label{stmt:proj-correctness}
\cref{algo:cost-proj} is a valid projection algorithm onto the set $\tilde B_\varepsilon$, as defined in \cref{eq:convex-superset}. For a given $\rx, \rx'$, the algorithm returns $\delta^*$ such that:
\begin{equation}
\label{proj-min}
\begin{aligned}
    \delta^* = P_{\tilde B_\varepsilon(\x, y)}(\rx') & \define \argmin_{\delta \in \sR^d}\ \|\rx' - (\rx + \delta)\|_2 \\
    & \text{s.t. } \rcost(\rx, \rx + \delta) \leq \varepsilon
\end{aligned}
\end{equation}
\end{statement}
\begin{proof}
Observe that if we keep either $c_{j+}({\x})$ or $c_{j-}({\x})$, the constraint becomes a standard weighted $\lp{1}$ constraint. Projection onto a weighted $\lp{1}$ ball is equivalent to projection onto the \newterm{simplex}~\cite{perez2020efficient}
in non-trivial cases in which the projection point lies outside of the ball. The algorithm by \citet{perez2020efficient} is a valid projection onto the simplex. We want to prove that our optimization can be transformed into projection onto the simplex, which means we can use their algorithm. In our notation, the simplex can be defined as the following set:
\begin{equation}\label{eq:simplex}
    \{ \rx' \in \sR^d \mid \rcost^*(\rx, \rx') = \varepsilon \}, \text{ with } \rcost^*(\rx, \rx') \define \sum_{j = 1}^{d} w_j \, | \rx_j - \rx'_j|,
\end{equation}
where the weights $w_j > 0$ are positive constants for $i \in [d]$, and $\rx_j$ denotes the $j$-th dimension of an encoded vector $\rx = \enc(\x)$.

In order to find the $w_i$ weights of our target simplex, we first show that the projection onto our cost ball lies in the same quadrant as the original point:

\begin{lemma}
\label{app-lemma}
For any $\rx, \rx', \varepsilon$, and $\delta^*$ defined as follows: 
\begin{equation}
\delta^* = \argmin_{\delta \in \sR^d:\ \rcost(\rx, \rx + \delta) \leq \varepsilon} \|\rx' - \rx - \delta\|_2.
\end{equation}
It holds for any $i \in [d]$ that either $\text{sign}(\delta_i^*) = \text{sign}(\rx'_i - \rx_i)$, or $\text{sign}(\delta_i^*) = 0$.
\end{lemma}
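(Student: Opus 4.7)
The argument is a straightforward contradiction based on the fact that both the objective and the constraint decompose coordinate-wise, so modifying a single coordinate of $\delta^*$ in isolation affects each separately. The plan is to suppose that some coordinate $i$ violates the claimed sign property and then exhibit a strictly better feasible perturbation by zeroing out that coordinate.

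More concretely, I would proceed as follows. First, assume for contradiction that there is an index $i$ with $\delta_i^* \neq 0$ and $\mathrm{sign}(\delta_i^*) \neq \mathrm{sign}(\rx'_i - \rx_i)$ (interpreting the case $\rx'_i = \rx_i$ as requiring $\delta_i^* = 0$). Define $\tilde\delta \in \sR^d$ by $\tilde\delta_j = \delta_j^*$ for $j \neq i$ and $\tilde\delta_i = 0$. I then need to check two things: feasibility of $\tilde\delta$, and that $\tilde\delta$ strictly improves the objective.

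For feasibility, I would invoke modularity of $\rcost$ as given in \cref{eq:final-cost-function}, namely $\rcost(\rx, \rx+\delta) = \sum_j \rcost_j(\rx_j, \rx_j+\delta_j)$, together with the fact that $\rcost_j(\rx_j, \rx_j) = 0$ and $\rcost_j \geq 0$ (by construction of both the categorical and numeric per-feature cost functions). Thus
\[
\rcost(\rx, \rx+\tilde\delta) = \sum_{j \neq i} \rcost_j(\rx_j, \rx_j+\delta_j^*) \leq \rcost(\rx, \rx+\delta^*) \leq \varepsilon,
\]
so $\tilde\delta$ satisfies the cost constraint. For the objective, the squared $\ell_2$ norm splits coordinate-wise, and the only term that differs between $\delta^*$ and $\tilde\delta$ is the $i$-th one: $(\rx'_i - \rx_i - \delta_i^*)^2$ versus $(\rx'_i - \rx_i)^2$. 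Under the sign assumption, $\delta_i^*$ and $\rx'_i - \rx_i$ lie on opposite sides of $0$ (or $\rx'_i = \rx_i$ while $\delta_i^* \neq 0$), hence $|\rx'_i - \rx_i - \delta_i^*| > |\rx'_i - \rx_i|$, giving a strict decrease. This contradicts the optimality of $\delta^*$ and completes the proof.

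I do not anticipate any genuine obstacle. The only care needed is in separating the edge case $\rx'_i - \rx_i = 0$, where the sign comparison is vacuous but the same ``zero out coordinate $i$'' move still strictly lowers the squared distance (from $(\delta_i^*)^2 > 0$ to $0$) while weakly lowering the cost. The argument relies crucially on the modular structure of $\rcost$ and the fact that $\rcost_i$ vanishes exactly when its arguments agree, both of which were explicitly baked into the construction of the encoding and relaxed cost in \cref{sec:defenses-relax}.
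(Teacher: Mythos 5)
Your proof is correct and uses the same contradiction skeleton as the paper: assume some coordinate $i$ has $\delta^*_i \neq 0$ with sign disagreeing with that of $\rx'_i - \rx_i$, construct a competitor that differs from $\delta^*$ only in coordinate $i$, and exploit the coordinate-wise separability of the squared $\ell_2$ objective to obtain a strict improvement. The difference lies in the competitor: the paper flips the sign, setting $\delta^+_i = -\delta^*_i$, whereas you zero the coordinate, $\tilde\delta_i = 0$. Your choice buys two things. First, feasibility is automatic: since $\rcost$ in \cref{eq:final-cost-function} is a sum of nonnegative per-coordinate terms that vanish at zero perturbation, zeroing a coordinate can only lower the cost, so $\tilde\delta$ stays in $\tilde B_\varepsilon$; the paper's sign-flip does not have this property when $c_{j+}(\x) \neq c_{j-}(\x)$ (the flipped point can be strictly more expensive and thus potentially infeasible), and the paper's proof never verifies feasibility of $\delta^+$. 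Second, your argument covers the edge case $\rx'_i = \rx_i$ with $\delta^*_i \neq 0$, which is part of the negation of the lemma but where the sign-flip gives no strict decrease, since $(\rx'_i - \rx_i - \delta^*_i)^2 = (\rx'_i - \rx_i + \delta^*_i)^2$ there; zeroing gives the strict decrease from $(\delta^*_i)^2$ to $0$. So your variant is essentially the paper's idea with a better-chosen competitor point, and it in fact closes these two small gaps in the paper's argument.
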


\begin{proof}
Proof by contradiction. Let us assume that the lemma does not hold. Then, there exist $i$ such that $\text{sign}(\delta^*_i) = -\text{sign}(\rx'_i - \rx_i) \text{ and } \text{sign}(\delta^*_i) \neq 0$. Then, we can construct $\delta^+$ such that for all $\forall j \neq i$, $\delta^+_j = \delta^*_j \text{ and } \delta^+_i = -\delta^*_i$. 
$$
\|\rx' - \rx - \delta^*\|^2_2 = \|\rx' - \rx - \delta^+\|^2_2 - (\rx'_i - \rx_i - \delta^*_i)^2 + (\rx'_i - \rx_i - \delta^+_i)^2 
$$
Since $\text{sign}(\delta^*_i) = -\text{sign}(\rx'_i - \rx_i) \text{ and } \text{sign}(\delta^+_i) \neq 0$, 
$$
(\rx'_i - \rx_i - \delta^*_i)^2 > (\rx'_i - \rx_i - \delta^+_i)^2
$$
Therefore,
$$
\|\rx' - \rx - \delta^+\|^2_2 < \|\rx' - \rx - \delta^*\|^2_2,
$$
which is a contradiction to the original statement.
\end{proof}
Based on this lemma we can see that in order to find the projection of $\rx'$, we can replace $\rcost(\rx, \rx')$ with the following expression:
\begin{equation} 
\rcost^*(\rx, \rx') = \sum_{i \in \mathcal{C}} \frac{1}{2} \|\bar \x_i - \bar \x'_i\|_1 \min_{t \in \feasible_i(x, y)} \cost_i(\x_i, \x_i') + \sum_{j \in \mathcal{I}} c_{j*}(\x) \cdot |\bar \x_j' - \bar \x_j|, \\
\end{equation}
where $c_{j*}$ is defined as follows:
\[
c_{j*}({\x}) = \begin{cases} 
c_{j+}({\x}), & \text{if } \text{sign}(\rx'_j - \rx_j) \geq 0\\
c_{j-}({\x}), & \text{if } \text{sign}(\rx'_j - \rx_j) < 0 \\
\end{cases}
\]
Observe that this expression is an instance of a simplex constraint in \cref{eq:simplex}. Therefore, once we have set the appropriate weights $w_i$ in Line 2 of \cref{algo:cost-proj}, the rest of the algorithm is a valid projection.
\end{proof}

\begin{figure}[t]
\vspace{-0.12in}
\begin{algorithm}[H]
    \caption{Cost-Ball Projection Algorithm}
    \label{algo:cost-proj}
	\hspace*{\algorithmicindent} \textbf{Input}  $\rx, \rx', \cost, \varepsilon, \mathcal{C}, \mathcal{I}$ \\
	\hspace*{\algorithmicindent} \textbf{Output} $\delta^* = P_{\tilde B_\varepsilon(\x, y)}(\rx')$
    \begin{algorithmic}[1]
        \State $\delta = \rx' - \rx$
	    \State \textcolor{\highlightcolor}{$w_i := \begin{cases}
                        \min_{t \in \feasible_i(x, y)} \cost_i({\x}_i, t),& \text{if } i \in \mathcal{C}\\
                        c_{j-}({\x}),  & \text{if } i \in \mathcal{I} \text{ and } \delta_i < 0 \\
                        c_{j+}({\x}),  & \text{if } i \in \mathcal{I} \text{ and } \delta_i \geq 0 \\
                        \end{cases}$}
        \State $z_i := \frac{\delta_{i}}{w_{i}}$
        \State $\pi_z() := \text{Permutation } \uparrow (z)$
        \State $z_i := z_{\pi_z(i)}$
        \State $J := \max \left\{ j: \frac{-\varepsilon + \sum_{i=j+1}^{m} w_{\pi_z(i)} \delta_{\pi_z(i)}}{\sum_{i=j+1}^{m} w_{\pi_z(i)}^{2}}>z_{j} \right\}$
        \State $\lambda := \frac{-\varepsilon+\sum_{j=J+1}^{m} w_{\pi_z(j)} \delta_{\pi_z(j)}}{\sum_{j=J+1}^{m} w_{\pi_z(j)}^{2}}$
        \State $\delta^*_i := \text{sign}(\delta_i) \max \left(\delta_{i}-w_{i} \lambda, 0\right)$
    \end{algorithmic}
    \hspace*{\algorithmicindent} \textbf{Return} $\delta^*_i$
\end{algorithm}
\vspace{-0.1in}
\footnotesize
\setlength\fboxsep{0pt}
The \textcolor{\highlightcolor}{highlighted} parts indicate the differences with respect to the sort-based weighted $\lp{1}$ projection algorithm~\cite{perez2020efficient}. The function $\pi_z(i)$ denotes an outcome of permutation. $\text{Permutation } \uparrow (z)$ is a sort permutation in an ascending order.
\end{figure}

\section{Additional Details on the Experiments}
\label{sec:experiments-extra}
We provide the details of our experimental setup. 

\begin{algorithm}[t]
    \caption{PGD-Based Attack}
    \label{algo:pgd-alg}
	\textbf{Input:} Initial example $\rx$, label $y$, costs $w$, cost bound $\varepsilon$. \\
	\textbf{Output:} Adversarial example $\rx'$
    \begin{algorithmic}[1]
        \State $\alpha := 2 \frac{\varepsilon}{n}$
        \State $\delta := 0$
        \For{ $j \textbf{ in } 1..n$}
            \State $\nabla := \nabla_{\delta} \ell(f_{\theta}(\rx+\delta), y)$
            \State $\delta := \delta + \alpha \frac{\nabla}{\|\nabla\|_1}$
            \State $\delta^* = P_{\tilde B_\varepsilon(\x, y)}(\rx + \delta)$
        \EndFor
        $\rx' = \rx' + \delta^*$
    \end{algorithmic}
    \hspace*{\algorithmicindent} \textbf{Return} $\rx'$
\end{algorithm}

\begin{figure}[t]
    \centering
    \includegraphics[width=.33\linewidth]{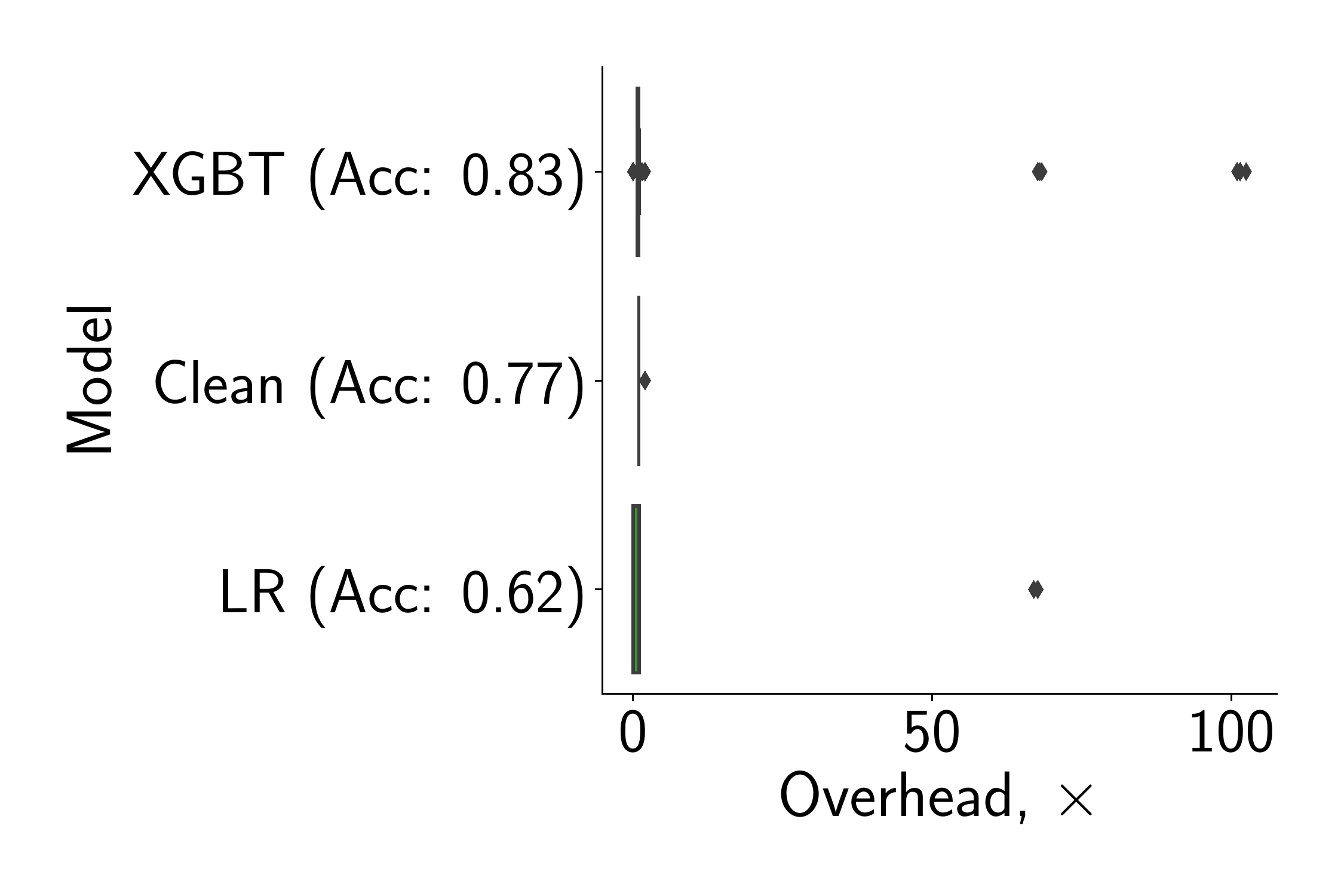}
    \caption{The distribution of cost overhead of adversarial examples obtained with UG over minimal-cost adversarial examples obtained with UCS on \ieeecis. Most UG adversarial examples have cost close to the minimal, although there exist outliers.}
    \label{fig:ieeecis-overhead}
\end{figure}

\label{sec:hyperparameters}

\begin{table}[t]
    \centering
    \caption{\ieeecis and \homecredit attack and defense parameters}
    \begin{tabular}{@{}ll}
    \midrule
    \textbf{Parameter} & \textbf{Value range} \\
    \midrule
    \multicolumn{2}{l}{Adversarial Training (\ieeecis)}  \\
    \midrule
    \textit{Batch size} & 2048 \\
    \textit{Number of epochs} & 400 \\
    \textit{PGD iteration number} & 20 \\
    \textit{TabNet hyperparameters} & $N_{D}=16,N_{A}=16,N_{steps}=4$ \\
    \textit{$\varepsilon$ (for CB models)} & $[1, 3, 10, 30]$ \\
    \textit{$\tau$ (for UB models)} & $[ 0, 10, 20, 50, 100, 200, 500]$ \\
    \midrule
     \multicolumn{2}{l} {Attacks (\ieeecis)} \\
    \midrule
    \textit{Max. iterations} & 100K \\
    \textit{$\varepsilon$ (for CB attacks)} & $[1, 3, 10, 30]$ \\
    \textit{$\tau$ (for UB attacks)} & $[0, 10, 50, 500, 1000]$ \\

    \midrule
    \multicolumn{2}{l}{Adversarial Training (\homecredit)} \\
    \midrule
    \textit{Batch size} & 2048 \\
    \textit{Num. of epochs} & 100 \\
    \textit{TabNet hyperparameters} & $N_{D}=16,N_{A}=16,N_{steps}=4$ \\
    \textit{Num. of PGD iterations} & 20 \\
    \textit{$\varepsilon$ (for CB models)} & $[1, 10, 100, 1000, 10000]$ \\
    \textit{$\tau$ (for UB models)} & $[ 300K, 400K, 500K, 600K, 800K ]$ \\
    \midrule
     \multicolumn{2}{l}{Attacks (\homecredit)} \\
    \midrule
    \textit{Num. of iterations} & 100 \\
    \textit{$\varepsilon$ (for CB attacks)} & $[1, 10, 100, 1K, 10K]$ \\
    \textit{$\tau$ (for UB attacks)} & $[10K, 300K, 400K, 500K, 600K, 800K]$ \\
    \end{tabular}
\label{tab:hpar-homecredit}
\end{table}

\subsection{Hyperparameter selection}\label{sec:app-hyp}
We list our defense and attack parameters in \cref{tab:hpar-homecredit}. The TabNet parameters are denoted according to the original paper~\cite{arik2021tabnet}. We set the virtual batch size to 512. As training the clean baseline for HomeCredit was prone to overfitting in our setup, we reduced the number of training epochs to 100. Other hyperparameters were selected with a grid search.

\subsection{Dataset Processing and Adversarial Cost Models}\label{sec:cost-models}
For each dataset, we create an adversarial cost based on hypothetical scenarios. In this section, we describe how we process the data, and how we assign costs to modifications of the features in each dataset.

\subsubsection{\twitterbot}
We use 19 numeric features from this dataset. We drop three features for which we cannot compute the effect of a transformation as we do not have access to the original tweets. We use the number of followers as the adversary's gain. We assign costs of features based on estimated costs to purchase Twitter accounts of different characteristics on darknet markets.

\subsubsection{\ieeecis}
We ascribe cost of changes, assuming that the adversary can change the device type and email address at a small cost. The device type can be changed with low effort using specific software. Email domain can be changed with a registration of a new email address which typically cannot be automated. Although also low cost, it takes more time and effort than changing the device time. We reflect these assumptions ascribing the costs \$0.1 and \$0.2 to these changes. Changing the type of the payment card requires obtaining a new card, which costs approximately \$20 in US-based darknet marketplaces as of the time of writing. We consider the transaction amount as a gain obtained by an adversary.

\subsubsection{\homecredit}
The main goal of the adversary in this task is receiving a credit approval. As one example represents a loan application, we set the credit amount to be the gain of the example. All features which can be used by an adversary are listed in \cref{tab:homecredit_costs} with the costs we ascribe to them. We assume six groups of features and estimate the cost as follows: 
\begin{itemize}
    \item \textit{Group 1}: Features that an adversary can change with negligible effort such as an email address, weekday, or hour of the loan application. We ascribe \$0.1 cost to these transformations.
    \item \textit{Group 2}: Features associated to income. We use these as numerical features to illustrate the flexibility of our method. We assume that to increase income by \$1, the adversary needs to pay \$1.
    \item \textit{Group 3}: Features associated to changing a phone number. Based on the US darknet marketplace prices as of the time of writing, we estimate that purchasing a SIM card costs \$10.
    \item \textit{Group 4}: Features related to official documents which can be temporally changed. For example, a car's ownership can be transferred from one person to another for the application period, and returned to the original owner after it. We ascribe a cost of \$100 to these changes. 
    \item \textit{Group 5}: Features that require either document forging or permanent changes to a person's status. For instance, purchasing a fake university diploma. For the sake of an example, we estimate their cost at \$1,000.
    \item \textit{Group 6}: Features related to credit scores provided by external credit-scoring agencies. We estimate the cost of changes in this group with a manipulation model based on a real-world phenomenon of credit piggybacking, described next.
\end{itemize}

\begin{table}[t!]
    \centering
    \small
    \caption{Costs of changing a feature in \twitterbot dataset}
    \label{tab:twitter_costs}
    \begin{tabular}{@{}lc}
    \textbf{Feature} & \textbf{Estimated cost, \$} \\
    \midrule 
    \textit{likes\_per\_tweet} & 0.025 \\
    \textit{retweets\_per\_tweet} & 0.025 \\
    \textit{user\_tweeted} & 2 \\
    \textit{user\_replied} & 2 \\
    \end{tabular}
    \vspace{1em}
    \centering
    \small
    \caption{Costs of changing a feature in \ieeecis dataset}
    \label{tab:ieeecis_costs}
    \begin{tabular}{@{}lc}
    \textbf{Feature} & \textbf{Estimated cost, \$} \\
    \midrule
    \textit{DeviceType} & 0.1 \\
    \textit{P\_emaildomain} & 0.2 \\
    \textit{card\_type} & 20 \\
    \end{tabular}
    \centering
    \small
    \vspace{1em}
    \caption{Costs of changing a feature in \homecredit}
    \begin{tabular}{@{}lc}
    \textbf{Feature} & \textbf{Estimated cost, \$} \\
    \midrule 
    \textit{NAME\_CONTRACT\_TYPE} & 0.1 \\
    \textit{NAME\_TYPE\_SUITE} & 0.1 \\
    \textit{FLAG\_EMAIL} & 0.1 \\
    \textit{WEEKDAY\_APPR\_PROCESS\_START} & 0.1 \\
    \textit{HOUR\_APPR\_PROCESS\_START} & 0.1 \\
    \hline
    \textit{AMT\_INCOME\_TOTAL} & 1 \\
    \hline
    \textit{FLAG\_EMP\_PHONE} & 10 \\
    \textit{FLAG\_WORK\_PHONE} & 10 \\
    \textit{FLAG\_CONT\_MOBILE} & 10 \\
    \textit{FLAG\_MOBIL} & 10 \\
    \hline
    \textit{FLAG\_OWN\_CAR} & 100 \\
    \textit{FLAG\_OWN\_REALTY} & 100 \\
    \textit{REG\_REGION\_NOT\_LIVE\_REGION} & 100 \\
    \textit{REG\_REGION\_NOT\_WORK\_REGION} & 100 \\
    \textit{LIVE\_REGION\_NOT\_WORK\_REGION} & 100 \\
    \textit{REG\_CITY\_NOT\_LIVE\_CITY} & 100 \\
    \textit{REG\_CITY\_NOT\_WORK\_CITY} & 100 \\
    \textit{LIVE\_CITY\_NOT\_WORK\_CITY} & 100 \\
    \textit{NAME\_INCOME\_TYPE} & 100 \\
    \textit{CLUSTER\_DAYS\_EMPLOYED} & 100 \\
    \textit{NAME\_HOUSING\_TYPE} & 100 \\
    \textit{OCCUPATION\_TYPE} & 100 \\
    \textit{ORGANIZATION\_TYPE} & 100 \\
    \hline
    \textit{NAME\_EDUCATION\_TYPE} & 1000 \\
    \textit{NAME\_FAMILY\_STATUS} & 1000 \\
    \textit{HAS\_CHILDREN} & 1000 \\
    \end{tabular}
\label{tab:homecredit_costs}
\end{table}

\paragraph{Credit-Score Manipulation.}\label{sec:credit-score}
In our feature set we include the features that contain credit scores from unspecified external credit-scoring agencies. One reported way of affecting such credit scores is using credit piggybacking~\cite{piggybacking}. During piggybacking, a rating buyer finds a ``donor'' willing to share a credit for a certain fee. We introduce a model that captures costs of manipulating a credit score through piggybacking.

We assume that after one piggybacking manipulation the rating is averaged between ``donor'' and recipient, and that ``donors'' have the maximum rating ($1.0$). Then, the cost associated to increasing the rating from, e.g., $0.5$ to $0.75$ is the same as that of increasing from $0.9$ to $0.95$. This cost cannot be represented by a linear function.
Let the initial score value be $x$. The updated credit score after piggybacking is $x' = \nicefrac{(x + 1)}{2}$.
If we repeat the operation $n$ times, the score becomes:
$$
x' = \frac{x + 2^n - 1}{2^n}
$$
Thus, the number of required piggybacking operations can be computed from the desired final score $x'$ as $
n = \log_2\frac{1-x}{1-x'}$,
and the total cost is $c(x, x') = nC$, where $C$ is the cost of one operation. For the sake of an example, we estimate it to be \$10,000.
$$
c(x, x') = C\log_2\frac{1-x}{1-x'} = C(\log_2(1-x) - \log_2(1-x'))
$$
To represent this cost function for adversarial training, we can use the encoding described in \cref{sec:defenses-relax}, setting $\enc(x) = \log_2(1-x)$. Then, the relaxed cost function becomes $\rcost(x, x') = C|\enc(x) - \enc(x')|$, which is suitable for our defense algorithm. 
This is not a fully realistic model, as we cannot know how exactly credit score agencies compute the rating. However, it is based on a real phenomenon and enables us to demonstrate our framework's support of non-linear costs.

\begin{table}[t]
    \centering
    \caption{Effect of beam size $B$ in the Universal Greedy algorithm on the \ieeecis dataset. The success rates are close for all choices of the beam size, thus the beam size of one offers the best performance in terms of runtime.}
    \label{tab:beam-size}
    \begin{tabular}{r|rrrr}
    \toprule
    {} & \multicolumn{4}{l}{Adv. success, \%} \\
    Cost bound $\rightarrow$ &         10 & 30 & Gain & $\infty$ \\
    Beam size $\downarrow$ &              &       &      &  \\
    \midrule
    1         &        45.32 & \textbf{56.57} & \textbf{56.22} & \textbf{68.20} \\
    10        &        45.32 & 56.01 & 55.65 & 56.01 \\
    100       &        45.32 & 56.53 & 56.18 & 56.53 \\
    \bottomrule
    \end{tabular}
    \\[1em]
    \begin{tabular}{r|rrrr}
    \toprule
    {} & \multicolumn{4}{l}{Success/time ratio} \\
    Cost bound $\rightarrow$ &         10 & 30 & Gain & $\infty$ \\
    Beam size $\downarrow$ &              &       &      &  \\
    \midrule
    1         &               \textbf{3.78} & \textbf{4.80} & \textbf{2.53} & \textbf{2.06} \\
    10        &               2.14 & 2.25 & 1.31 & 1.15 \\
    100       &               0.66 & 0.65 & 0.65 & 0.66 \\
    \bottomrule
    \end{tabular}

\end{table}

\begin{figure*}
    \captionsetup[subfigure]{justification=centering}
    \centering
    \vspace{1em}
    \begin{subfigure}[t]{.5\linewidth}
    \centering
    \includegraphics[width=0.9\linewidth]{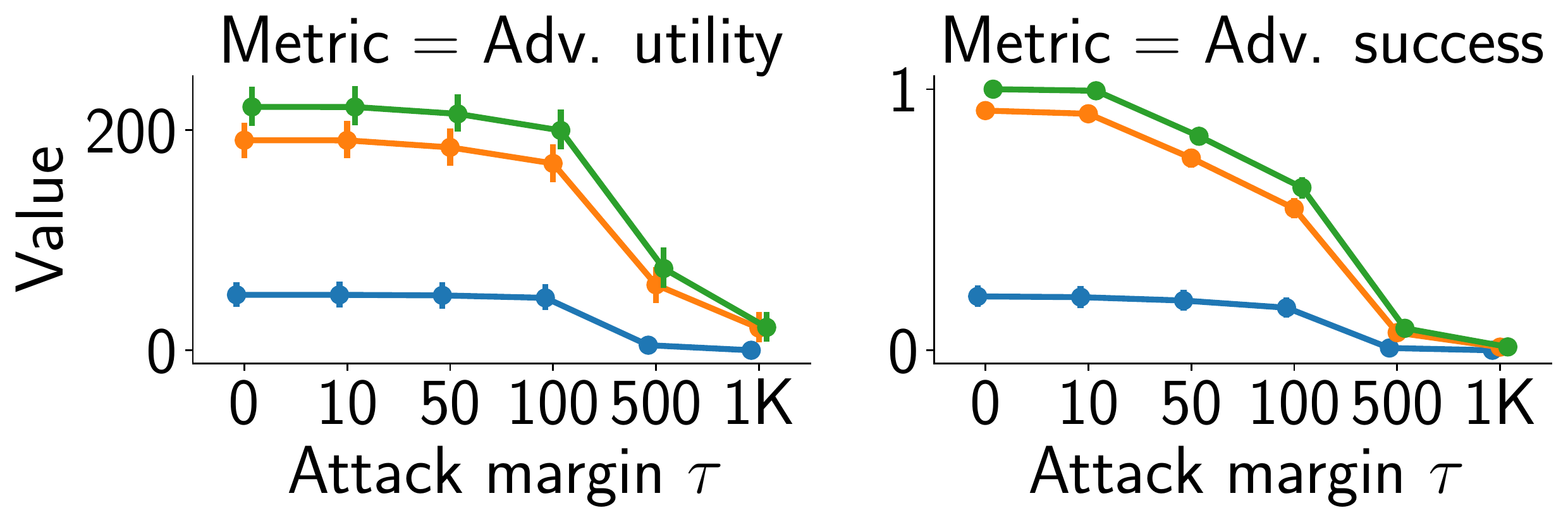} \\
    \vspace{.1em}
    \caption{
    \ieeecis. Model (test acc.): \\
    \textcolor{divergent1}{$\bullet$}~LR~(0.62) \, \textcolor{divergent2}{$\bullet$}~XGBT~(0.83) \, \textcolor{divergent3}{$\bullet$}~TabNet~(0.77)}
    \end{subfigure}~
    \begin{subfigure}[t]{.5\linewidth}
    \includegraphics[width=\linewidth]{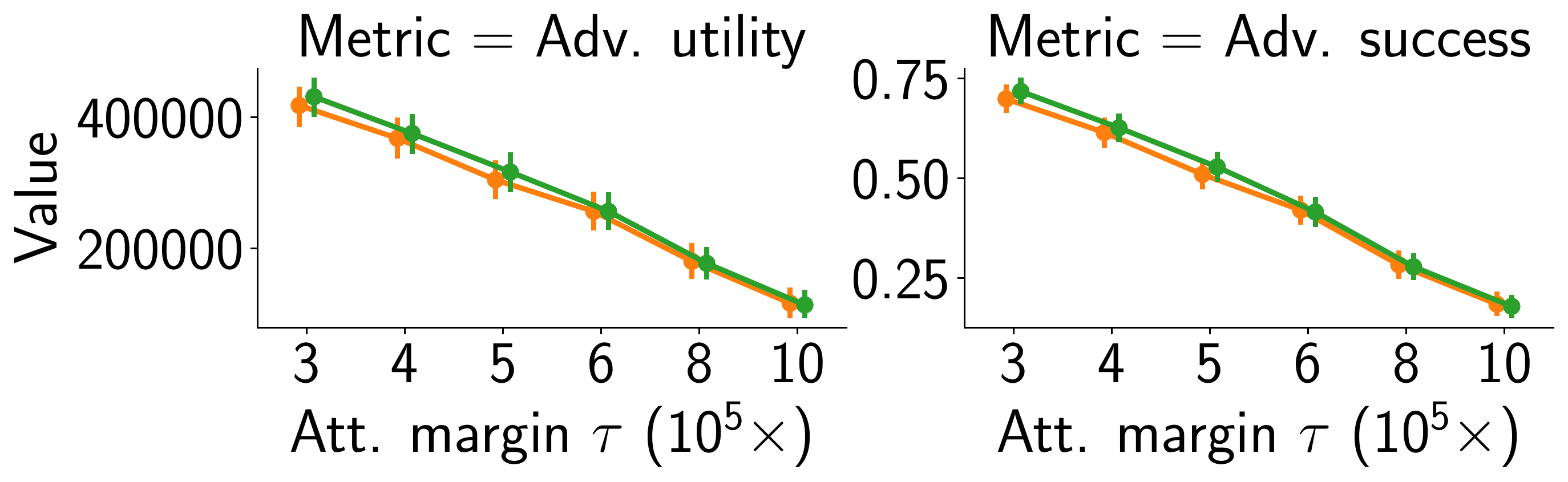}
    \vspace{-1em}
    \caption{\homecredit. Model (test acc.): \\
    \textcolor{divergent2}{$\bullet$}~XGBT~(0.65)\, \textcolor{divergent3}{$\bullet$}~TabNet~(0.68)}
    \end{subfigure}
    \caption{Results of utility-bounded graph-based attacks against three types of models. Left pane: Adversarial utility (higher is better for the adversary). Right pane: See \cref{fig:pgd-comparison}. 
    On \ieeecis, the attack can achieve utility from approximately up to approximately \$200 per attack against TabNet and XGBT. On \homecredit, the average utility ranges between $\$400,000$ and $\$200,000$.}
    \label{fig:attacks-regular-ub}
    \vspace{-1.2em}
\end{figure*}

\section{Variable-Gain Attacks}
\label{sec:var-gain}
In \cref{sec:experiments}, we evaluated the attacks and defenses under the assumption of constant gain (\cref{sec:assumptions}). To check the sensitivity of defense method to this assumption, we conduct a set of experiments to evaluate the defense performance against an adversary which has a capability to modify the gains.
\FloatBarrier
We use the same setup as the \ieeecis evaluation (see \cref{sec:experiments-extra}). As before, we use the value of the \emph{transaction amount} feature to be the gain of an adversarial example. Previously, the adversary was not capable of changing the value of this feature. In these experiments, we allow such changes. The adversarial cost of modifying the feature is equal to the change if the adversary increases the transaction amount, i.e., increasing the transaction amount by \$1 costs \$1. In this setup, cost-bounded adversarial training shows the same trends and demonstrates behaviour similar to the constant gain scenario, while the behaviour of utility-bounded training is different. 

\paragraph{Results.} We follow the protocol in \cref{sec:experiments} to evaluate both regular and adversarially trained models against the utility-bounded attack.
First, we show in \cref{fig:attacks-var} our results for the attack against undefended models. The results are not substantially different from the same experiment with constant gain (\ref{fig:attacks-regular}).
We show in \cref{fig:defense-var} the performance of our defenses against the utility-bounded attack. Cost-bounded adversarial training shows the same trends and demonstrates the behaviour similar to the constant gain scenario (see \cref{fig:defense-cb}). The situation with utility-bounded training is different. Although the robustness is always better after adversarial training, there is no clear trend between defense strength and attack success. Indeed, some models trained to be robust against large $\tau$ (less robust) are in fact more robust than models with low $\tau$. We posit that the reason is that the adversarial training was performed with the assumption of constant gain.

\begin{figure}[t]
    \centering
    \includegraphics[width=.5\linewidth]{images/ieeecis_attacks_regular_ub.pdf} \\
    \vspace{1em}
    \caption{\ieeecis. Model (test acc.): \,
    \textcolor{divergent1}{$\bullet$}~LR (0.62) \, \textcolor{divergent2}{$\bullet$}~XGBT (0.83) \, \textcolor{divergent3}{$\bullet$}~TabNet (0.77). \\
    Utility-bounded attack with variable gain against undefended models. We show the adversary's success and utility (y-axis) versus the adversary's desired utility marge $\tau$ (x-axis). The capability to modify the gain does not change the the utility or success, compared to constant gain (see \cref{fig:attacks-regular-ub}).
    }
    \label{fig:attacks-var}
\end{figure}

\begin{figure*}[t]
    \centering
    \includegraphics[width=0.7\linewidth]{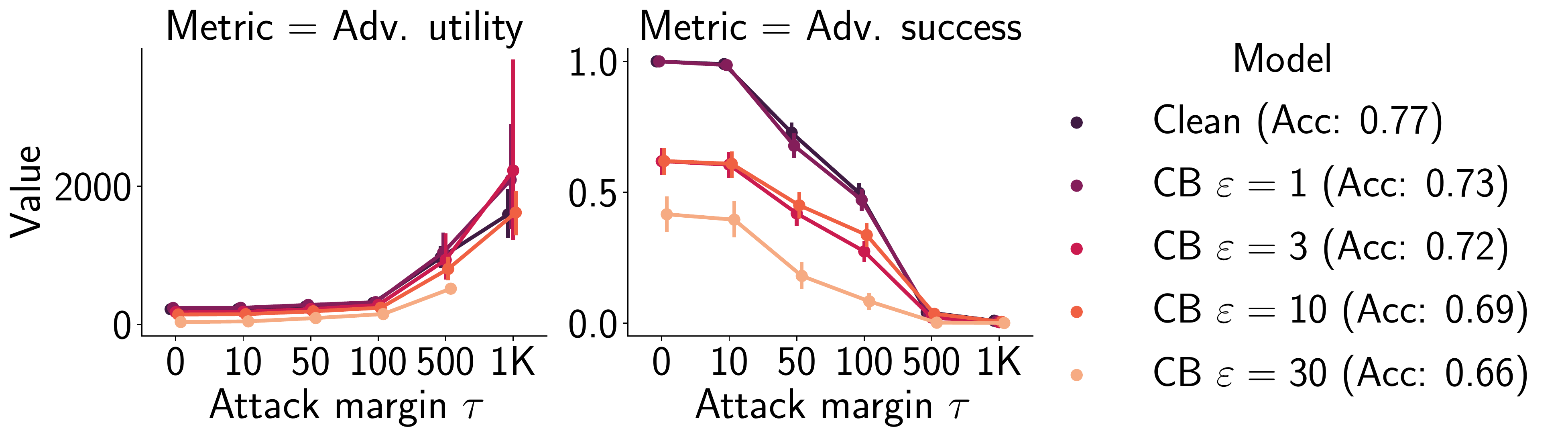}
    \\
    \includegraphics[width=0.7\linewidth]{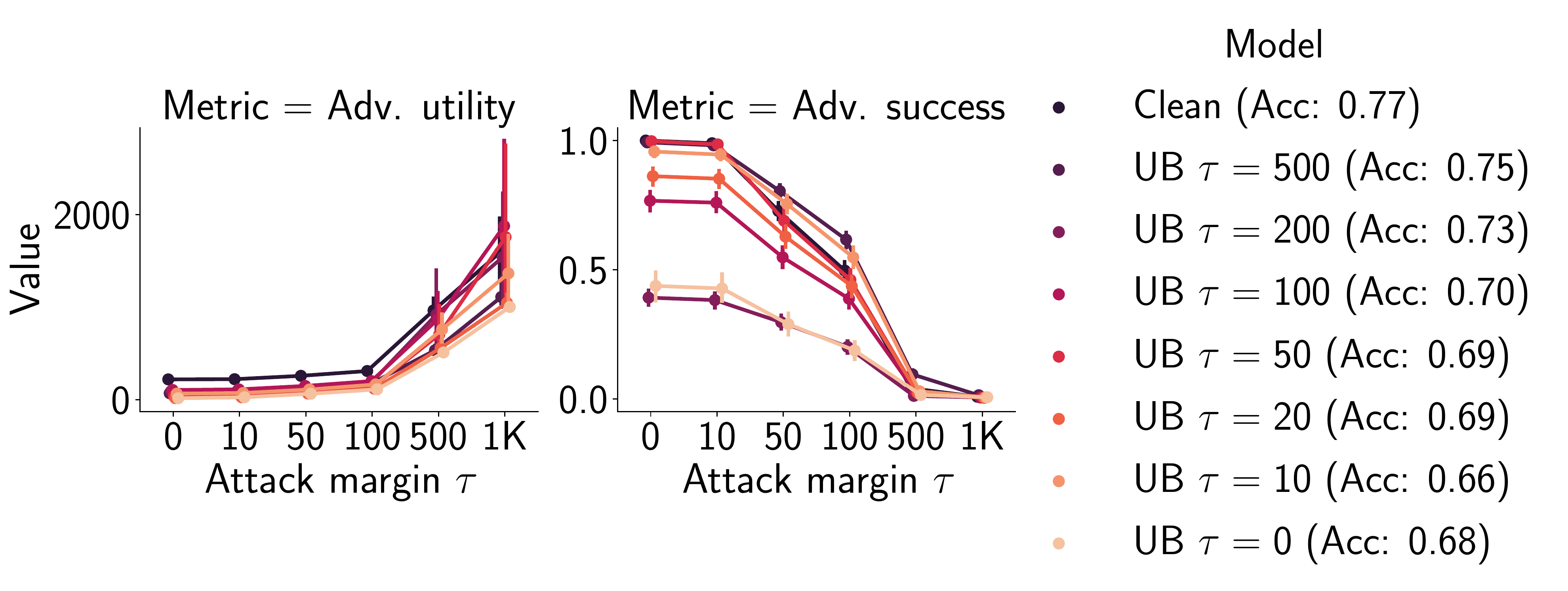}
    \caption{Utility-bounded attack with variable gain against cost-bounded (top) and utility-bounded (bottom) adversarially trained models. We show the adversary’s success and utility (y-axis) versus the adversary’s desired utility margin $\tau$ (x-axis). Both cost-bounded and utility-bounded adversarial training improves robustness compared with the undefended model against adversaries that are capable of modifying the gain.}
    \label{fig:defense-var}
\end{figure*}

\begin{figure*}[t]
    \begin{subfigure}[t]{\linewidth}
    \centering
    \includegraphics[width=0.75\linewidth]{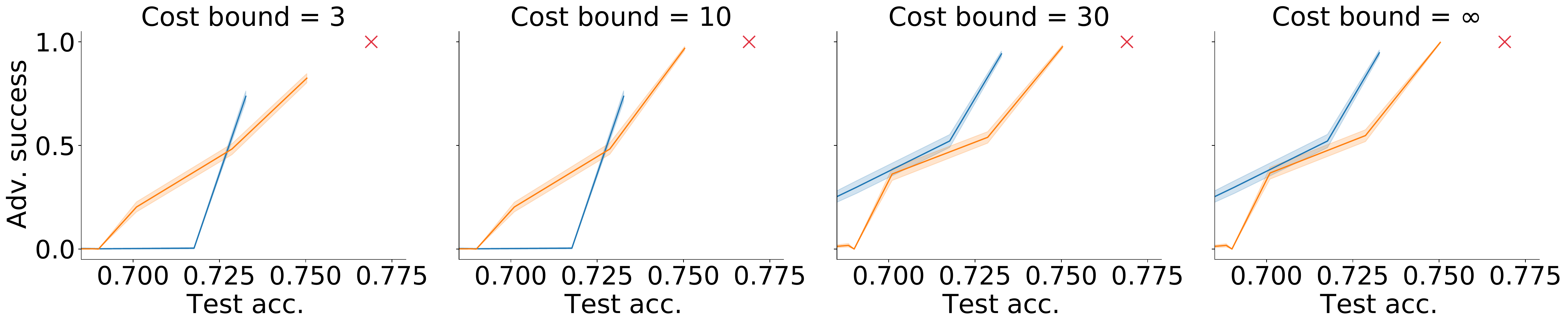}\\[1em]
    \includegraphics[width=0.75\linewidth]{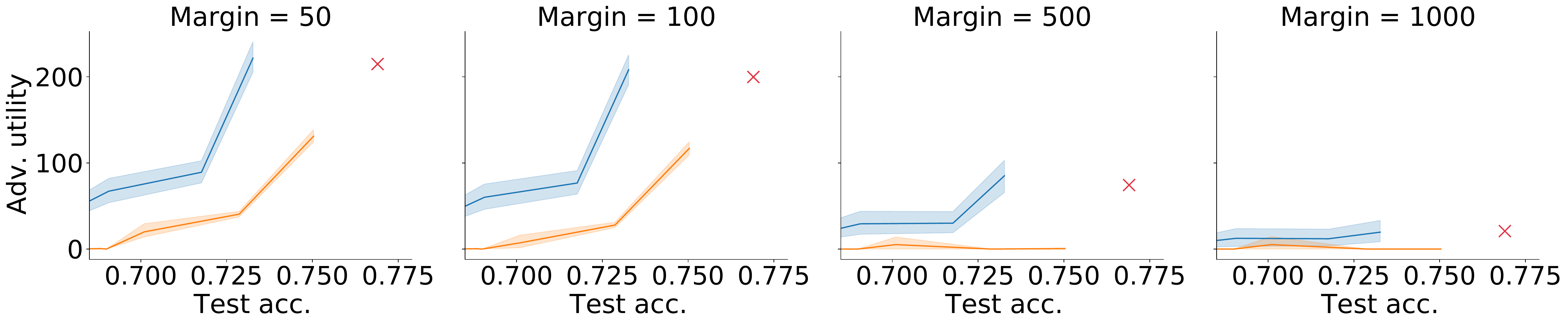}
    \caption{\ieeecis:\quad \textcolor{divergent1}{$\bullet$}~CB-trained models\quad \textcolor{divergent2}{$\bullet$}~UB-trained models\quad \textcolor{rocket3}{$\times$}~Clean model}
    \vspace{1.5em}
    \end{subfigure}
    \begin{subfigure}[t]{\linewidth}
    \centering
    \includegraphics[width=0.75\linewidth]{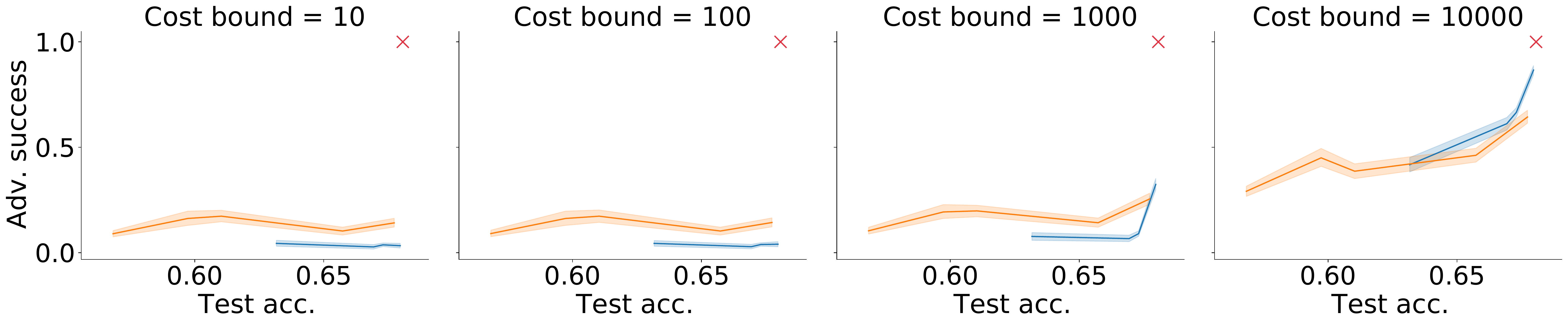}\\[1em]
    \hspace{-.5em}\includegraphics[width=0.75\linewidth]{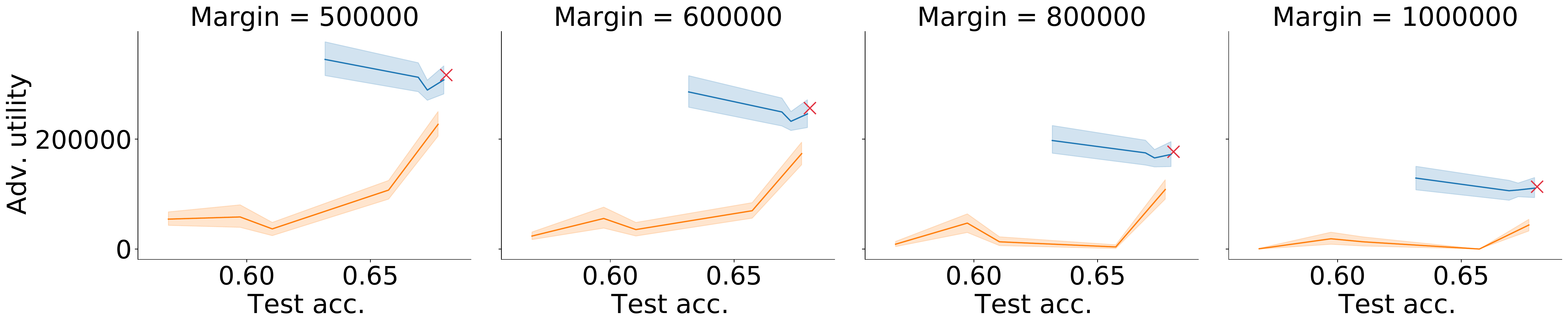}
    \caption{\homecredit:\quad \textcolor{divergent1}{$\bullet$}~CB-trained models\quad \textcolor{divergent2}{$\bullet$}~UB-trained models\quad \textcolor{rocket3}{$\times$}~Clean model}
    \end{subfigure}
    \caption{\textit{Accuracy-robustness and utility-robustness trade-offs} for Cost-bounded and Utility-bounded adversarially trained models. The curves show accuracy (x-axis) and utility and success rate (x-axis) for the utility- and cost-bounded models presented in \cref{fig:defense-cb} and \cref{fig:defense-ub}. When one curve is strictly below the other curve, it provides a better trade-off since it has better robustness for the same accuracy. Utility-bounded models consistently show better trade-offs for all utility-aware attacks. For CB attacks the situation is less consistent: for small cost-bounds CB defense outperforms utility-bounded one, while for the largest budgets utility bounded shows better results.
    }
    \label{fig:defense-tradeoffs}
\end{figure*}

\end{document}